\documentclass[12pt]{article}

\usepackage{amsmath}
\usepackage{amsfonts}
\usepackage{amsthm}
\usepackage{graphicx}
\usepackage{subfigure} 
\usepackage{natbib}
\usepackage{algorithm}
\usepackage{algorithmic}
\usepackage{hyperref}
\usepackage{float}
\usepackage{lipsum}
\usepackage{fullpage}

\newtheorem{defini}{Definition}
\newtheorem{lem}{Lemma}
\newtheorem{prop}{Proposition}
\newtheorem{them}{Theorem}
\newtheorem{assum}{Assumption}

\graphicspath{{eps/}}
\begin{document} 
	\newcommand{\argmax}{\mathop{\rm argmax}\limits}
\newcommand{\argmin}{\mathop{\rm argmin}\limits}

\newcommand{\hilight}[1]{{#1}}
\newcommand{\highlight}[2][yellow]{\mathchoice%
	{\colorbox{#1}{$\displaystyle#2$}}%
	{\colorbox{#1}{$\textstyle#2$}}%
	{\colorbox{#1}{$\scriptstyle#2$}}%
	{\colorbox{#1}{$\scriptscriptstyle#2$}}}%

\newcommand{\unorm}[1]{\|#1\|}
\newcommand{\unorms}[1]{\unorm{#1}^2}
\newcommand{\calX}{{\mathcal{X}}}
\newcommand{\calY}{{\mathcal{Y}}}
\newcommand{\gonetwo}{{g_{12}}}
\newcommand{\gonetwobar}{{\bar{g}_{12}}}
\newcommand{\calB}{{\mathcal{B}}}
\newcommand{\calBitwo}{{\mathcal{B}_{i,2}}}
\newcommand{\calBione}{{\mathcal{B}_{i,1}}}
\newcommand{\boldtheta}{{\boldsymbol{\theta}}}
\newcommand{\bolddelta}{{\boldsymbol{\delta}}}
\newcommand{\boldthetaP}{{\boldsymbol{\theta}}^{(p)}}
\newcommand{\boldthetaQ}{{\boldsymbol{\theta}}^{(q)}}
\newcommand{\boldthetaPtop}{{\boldsymbol{\theta}}^{(p)\top}}
\newcommand{\factorp}{{\phi}^P}
\newcommand{\factorq}{{\phi}^Q}
\newcommand{\boldalpha}{{\boldsymbol{\alpha}}}
\newcommand{\boldHh}{{\widehat{\boldH}}}
\newcommand{\boldeta}{{\boldsymbol{\eta}}}
\newcommand{\boldH}{{\boldsymbol{H}}}
\newcommand{\boldA}{{\boldsymbol{A}}}
\newcommand{\boldS}{{\boldsymbol{S}}}
\newcommand{\boldK}{{\boldsymbol{K}}}
\newcommand{\boldJ}{{\boldsymbol{J}}}
\newcommand{\boldT}{{\boldsymbol{T}}}
\newcommand{\boldTheta}{{\boldsymbol{\Theta}}}
\newcommand{\boldf}{{\boldsymbol{f}}}
\newcommand{\boldu}{{\boldsymbol{u}}}
\newcommand{\bolda}{{\boldsymbol{a}}}
\newcommand{\boldm}{{\boldsymbol{m}}}
\newcommand{\boldone}{{\boldsymbol{1}}}
\newcommand{\boldxi}{{\boldsymbol{\xi}}}
\newcommand{\boldv}{{\boldsymbol{v}}}
\newcommand{\boldk}{{\boldsymbol{k}}}
\newcommand{\boldb}{{\boldsymbol{b}}}
\newcommand{\boldbeta}{{\boldsymbol{\beta}}}
\newcommand{\boldDelta}{{\boldsymbol{\Delta}}}
\newcommand{\nnu}{\nsample}
\newcommand{\nsample}{n}
\newcommand{\subsetr}{\boldsymbol{r}}
\newcommand{\boldthetah}{{\widehat{\boldtheta}}}
\newcommand{\mathbbR}{\mathbb{R}}
\newcommand{\KL}{\mathrm{KL}}
\newcommand{\numparams}{n}
\newcommand{\boldhh}{{\widehat{\boldh}}}
\newcommand{\boldh}{{\boldsymbol{h}}}
\newcommand{\Hh}{{\widehat{H}}}
\newcommand{\boldxnu}{\boldY}
\newcommand{\boldx}{{\boldsymbol{x}}}
\newcommand{\boldxp}{{\boldsymbol{x}}_{p}}
\newcommand{\boldxq}{{\boldsymbol{x}}_{q}}
\newcommand{\boldz}{{\boldsymbol{z}}}
\newcommand{\boldg}{{\boldsymbol{g}}}
\newcommand{\boldw}{{\boldsymbol{w}}}
\newcommand{\boldr}{{\boldsymbol{r}}}
\newcommand{\boldQ}{{\boldsymbol{Q}}}
\newcommand{\boldF}{{\boldsymbol{F}}}
\newcommand{\boldzero}{{\boldsymbol{0}}}
\newcommand{\thetahat}{{\hat{\boldsymbol{\theta}}}}
\newcommand{\thetaShat}{{\hat{\boldsymbol{\theta}}_S}}
\newcommand{\thetaSchat}{{\hat{\boldsymbol{\theta}}_{S^c}}}
\newcommand{\zhat}{{\hat{\boldsymbol{z}}}}
\newcommand{\zSchat}{{\hat{\boldsymbol{z}}_{S^c}}}
\newcommand{\zShat}{{\hat{\boldsymbol{z}}_{S}}}
\newcommand{\nde}{\nsample'}
\newcommand{\boldxde}{\boldY'}
\newcommand{\boldX}{{\boldsymbol{X}}}
\newcommand{\boldY}{{\boldsymbol{Y}}}
\newcommand{\boldy}{{\boldsymbol{y}}}
\newcommand{\boldt}{{\boldsymbol{t}}}
\newcommand{\boldc}{{\boldsymbol{c}}}
\newcommand{\boldYnu}{{\boldsymbol{Y}}}
\newcommand{\boldYde}{{\boldsymbol{Y}}}
\newcommand{\boldpsi}{{\boldsymbol{\psi}}}
\newcommand{\hh}{{\widehat{h}}}
\newcommand{\boldI}{{\boldsymbol{I}}}
\newcommand{\PE}{{\widehat{PE}}}
\newcommand{\ratioh}{\widehat{\ratiosymbol}}
\newcommand{\ratiosymbol}{r}
\newcommand{\ratiomodel}{g}
\newcommand{\thetah}{{\widehat{\theta}}}
\newcommand{\mathbbE}{\mathbb{E}}
\newcommand{\pnu}{p_\mathrm{te}}
\newcommand{\pde}{p_\mathrm{rf}}
\newcommand{\refsection}{\boldS_\mathrm{rf}}
\newcommand{\tesection}{\boldS_\mathrm{te}}
\newcommand{\refY}{\boldY_\mathrm{rf}}
\newcommand{\teY}{\boldY_\mathrm{te}}
\newcommand{\nseg}{n}
\newcommand{\distP}{P}
\newcommand{\distQ}{Q}
\newcommand{\iid}{\stackrel{\mathrm{i.i.d.}}{\sim}}
\newcommand{\dx}{\mathrm{d}\boldx}
\newcommand{\dy}{\mathrm{d}\boldy}

\newcommand{\gxeta}{g(\boldx;\boldeta)}
\newcommand{\Zeta}{Z(\boldeta)}
\newcommand{\Zetahat}{\hat{Z}(\boldeta)}

\def\ratio{r}
\def\relratio{{\ratio}_{\alpha}}

\def\ci{\perp\!\!\!\perp} 
\newcommand\independent{\protect\mathpalette{\protect\independenT}{\perp}} 
\def\independenT#1#2{\mathrel{\rlap{$#1#2$}\mkern2mu{#1#2}}} 
\newcommand*\xor{\mathbin{\oplus}}

\newcommand{\vertiii}[1]{{\left\vert\kern-0.25ex\left\vert\kern-0.25ex\left\vert #1 
		\right\vert\kern-0.25ex\right\vert\kern-0.25ex\right\vert}}
	\title{Structure Learning of Partitioned Markov Networks}\date{}
	\author{Song Liu\\
		\url{liu@ism.ac.jp}\\
		The Institute of Statistical Mathematics, Japan\\[2mm]
		Taiji Suzuki\\
		\url{suzuki.t.ct@m.titech.ac.jp}\\
		Tokyo Institute of Technology, Japan\\
		[2mm]
		Masashi Sugiyama\\
		\url{sugi@k.u-tokyo.ac.jp}\\
		University of Tokyo, Japan\\[2mm]
		Kenji Fukumizu\\
		\url{fukumizu@ism.ac.jp}\\
		The Institute of Statistical Mathematics, Japan\\
	}
	\maketitle
\begin{abstract}
We learn the structure of a Markov Network between two groups of random variables from joint observations. 
Since modelling and learning the full MN structure may be hard, learning the links between two groups directly may be a preferable option. 
We introduce a novel concept called 
the \emph{partitioned ratio} whose factorization directly associates with the Markovian properties of random variables across two groups.
A simple one-shot convex optimization procedure is proposed for learning the \emph{sparse} factorizations of the partitioned ratio and it is theoretically guaranteed to recover the correct inter-group structure under mild  conditions.
The performance of the proposed method is experimentally compared with the state of the art MN structure learning methods using ROC curves. Real applications on analyzing bipartisanship in US congress and pairwise DNA/time-series alignments are also reported.
\end{abstract} 

\section{Introduction}
\label{sec.intro}
An undirected graphical model, or a Markov Network (MN) \citep{PGM_Koller,GM_Wainwright} has a wide range of applications in real world, such as natural language processing, computer vision, and computational biology. The structure of MN, which encodes the interactions among random variables, is one of the key interests of MN learning tasks. However, on a high-dimensional dataset, learning the full MN structure can be cumbersum since we may not have enough knowledge to model the entire MN, or our application only concerns a specific portion of the MN structure.

Rather than considering the full MN structure over the complete set of random variables, we focus on learning a portion of the MN structure that links \emph{two groups of random variables}, namely the Partitioned Markov Network (PMN).   
\begin{figure}
	\centering
	\includegraphics[width=.6\textwidth]{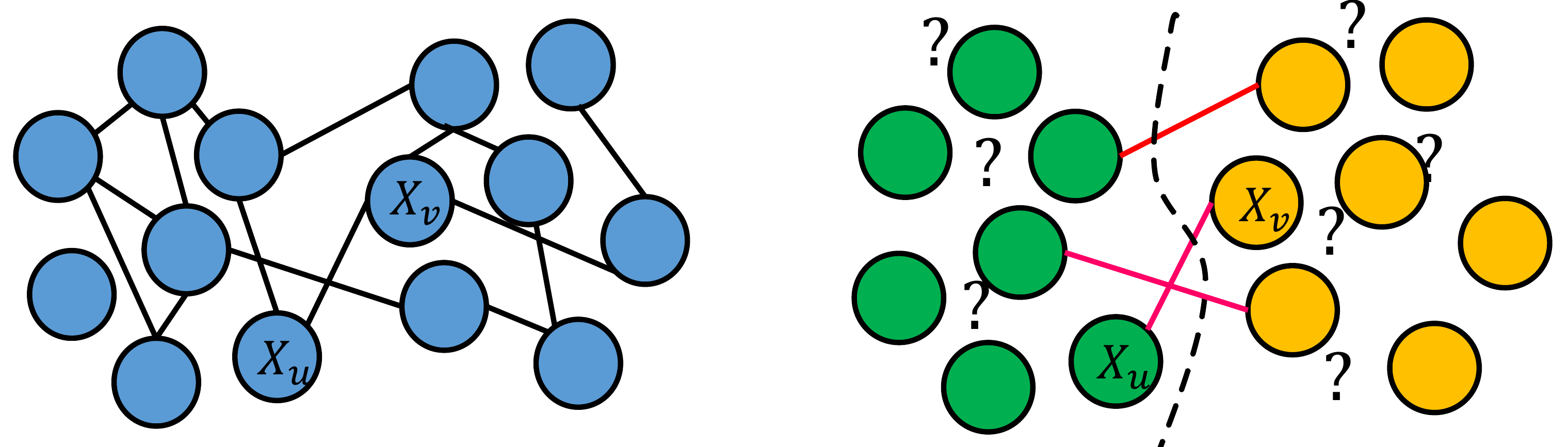}
	\caption{An illustration of a full MN (left) and PMN (right). Full MN models all the connections among random variables, while PMN only models the interactions between groups (red edges) and does not care connections within groups.}
	\label{fig.illus}
	\vspace*{-5mm}
\end{figure}
PMN is suitable for describing the ``inter-group relations''. For example, 
politicians in US Congress are naturally grouped into two parties (Democrats and Republicans). Learning a PMN on congresspersons via their voting records will reveal bipartisan collaborations among them. 
A full gene network may have complicated structure. However if genes can be clustered into a few homologous groups, PMN can help us understand how genes in different functioning groups interact with each other. An illustration of a full MN and a PMN is shown in Figure \ref{fig.illus}.



Since a PMN can be regarded as a ``sub-structure'' of a full MN, a naive approach may be learning a full MN over the complete set of random variables and figuring out its PMN.
In fact, the machine learning community has seen huge progresses on learning the \emph{sparse} structures of MNs, thanks to the pioneer works on sparsity inducing norms \citep{tibshirani1996lasso,zhao2006model,WainwrightL1Sharp}.

A majority of the previous works fall into the category of the regularized maximum likelihood approach which maximizes the likelihood function of a probabilistic model under sparsity constrains. Graphical lasso \citep{Friedman_GLasso,Banerjee_Model_Selection} considers a joint Gaussian model parameterized by the inverse covariance matrix, where zero elements indicate the conditional independence among random variables, while others have developed useful variations of graphical lasso in order to loosen the Gaussianity assumed on data \citep{nonparanormal,Poling-loh2012}. SKEPTIC \citep{liu2012high} is a semi-parametric approach that replaces the covariance matrix with the correlation matrix, such as Kendall's Tau in MN learning.

The latest advances along this line of research has been made by considering a node-wise conditional probabilistic model. Instead of learning all the structures in one shot, such a method focuses on learning the neighborhood structure of a single random variable at a time. Maximizing the conditional likelihood leads to simple logistic regression (in the case of the Ising model) \citep{Ravikumar_2010} or linear regression (in the case of the Gaussian model) \citep{MeinshausenBuhlmann}. 

Unfortunately, the maximum (conditional) likelihood method can be difficult to compute for general non-Gaussian graphical models, since computing the normalization term is in general intractable. Though one may use sampling such as Monte-carlo methods \citep{RobertMCStat2005} to approximate the normalization term, there is no universal guideline telling how to choose sampling parameters so that the approximation error is minimized. 

A more severe problem is that sparsity  approaches may have difficulties when learning a dense MN. Specifically, the samples size required for a successful structure recovery grows quadratically with the number of  connected neighbors  \citep{RaskuL1GMRF,Ravikumar_2010}. However, it is quite reasonable to assume that in some applications, one node may have many neighbors within its own group while connections to the other group are sparse: a congressperson is very well connected to other members inside his/her party but has only a few links with the opposition party. Genes in a homologous group may have dense structure but they only interact with another group of genes via a few ties. 

Is there a way to \emph{directly} obtain the PMN structure? Neither maximizing a joint nor conditional likelihood take the  ``partition information'' into account and interactions are modelled \emph{globally}. However PMN encodes only the \emph{local} conditional independence between groups, and the requirement for obtaining a good estimator should be much milder.


The above intuition leads us to a novel concept of the Partitioned Ratio (PR). Given a set of partitioned random variables $X = (X1, X2)$, PR is the ratio between the joint probability $P(X)$ and the product between its marginals $P(X1)P(X2)$, i.e. $\frac{P(X)}{P(X1)P(X2)}$. 
In the same way that the joint distribution can be decomposed into clique potentials of MN, we prove PR also factorizes over subgraph structures called \emph{passages}, which indicate the connectivity between two groups of random variables $X1$ and $X2$ in a PMN.

Conventionally, PR is a measure of the independence between two sets of random variables. In this paper, we show that the factorization of this quantity indicates the linkage between two groups of random variables, which is a natural extension of the regular usage of PR. 

Most importantly, we show the sparse factorization of this quantity may be learned via a one shot convex optimization procedure, which can be solved efficiently even for the general, non-Gaussian distributions. The correct recovery of sparse passage structure is theoretically guaranteed under the assumption that the sample size increases with the number of passages which is not related to the structure density of the entire MN. 

This paper is organized as follows. In Section \ref{sec.background}, we review the Hammersley and Clifford theorem (Section \ref{sec.HC}) and define some notations as preliminaries (Section \ref{sec.preliminaries}). The factorization theorems of PMN are introduced in Section \ref{sec.factorization} with a few simplifications. We give an estimator to obtain the sparse factorization of PR in Section \ref{sec.algorithm} and prove its recovered structure is consistent in Section \ref{sec.consistency}. Finally, experimental results on both artificial and real-world datasets are reported in Section \ref{sec.exp}. 

\section{Background and Preliminaries}
\label{sec.background}
In this section, we review the factorization theorems of MN.
We limit our discussions on strictly positive distributions from now on. A graph is always assumed to be finite, simple, and undirected.

\subsection{Background and Motivation}
\label{sec.HC}
\begin{defini}[MN]
	\label{def.MRF}
	For a joint probability $P(X)$ of random variables $X = \{X_1, X_2, \dots, X_m\}$, if for all $i$, $P(X_i|\backslash X_{ i}) = P(X_i | X_{N(i)})$, where $X_{N(i)}$ is the neighbors of node $X_i$ in graph $G$, then $P$ is an MN with respect to $G$.
\end{defini}

\begin{defini}[Gibbs Distribution]
	\label{def.Fact}
	For a joint distribution $P$ on a set of random variables $X$, if the joint density can be factorized as
	$$
	P(X) = \frac{1}{Z}\prod_{C \in \mathbf{C}(G)} \phi_{C}(X_{C}),
	$$
	where $Z$ is the normalization term, $\mathbf{C}(G)$ is the set of complete subgraphs of $G$ and each factor $\phi_{C}$ is defined only on a subset of random variables $X_C$, then $P$ is called a Gibbs distribution that factorizes over $G$.
\end{defini}
\begin{them}[See e.g., \citet{HC_them}]
	\label{thm.HC}
	If $P$ is an MN with respect to $G$ (Definition \ref{def.MRF}), then
	$P$ is a Gibbs distribution that factorizes over $G$ (Definition \ref{def.Fact})
\end{them}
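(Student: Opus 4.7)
The plan is to prove Theorem \ref{thm.HC} by the classical Möbius inversion argument. Fix an arbitrary reference configuration $x^* = (x_1^*,\ldots,x_m^*)$; strict positivity of $P$ ensures $P(x^*)>0$. For any subset $S \subseteq V = \{1,\ldots,m\}$ and any configuration $x$, define $x^S$ to be the configuration whose $i$th coordinate equals $x_i$ if $i \in S$ and $x_i^*$ otherwise. Set $H(x) = \log P(x) - \log P(x^*)$, and for each $S \subseteq V$ define
\[
\phi_S(x) \;=\; \sum_{T \subseteq S} (-1)^{|S \setminus T|}\, H(x^T).
\]
By the inclusion-exclusion/Möbius inversion identity, $H(x) = \sum_{S \subseteq V} \phi_S(x)$, and by construction each $\phi_S$ depends on $x$ only through the coordinates indexed by $S$ (since $H(x^T)$ only depends on $x_T$, and the sum is restricted to $T \subseteq S$). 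Exponentiating yields a factorization $P(x) = \frac{1}{Z} \prod_{S \subseteq V} \exp(\phi_S(x))$ with $Z = 1/P(x^*)$, absorbing the constant.

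The core step is to show that $\phi_S \equiv 0$ whenever $S$ is not a complete subgraph of $G$. Suppose $S$ contains two non-adjacent nodes $i$ and $j$. I would split the sum defining $\phi_S$ according to whether each of $i,j$ lies in $T$, grouping the $2^{|S|-2}$ quadruples $(T, T\cup\{i\}, T\cup\{j\}, T\cup\{i,j\})$ for $T \subseteq S\setminus\{i,j\}$. Each such quadruple contributes $\pm\bigl[H(x^{T\cup\{i,j\}}) - H(x^{T\cup\{i\}}) - H(x^{T\cup\{j\}}) + H(x^T)\bigr]$, which can be rewritten as
\[
\log \frac{P(x^{T \cup \{i,j\}})}{P(x^{T \cup \{j\}})} \;-\; \log \frac{P(x^{T \cup \{i\}})}{P(x^T)}.
\]
Both ratios are conditional probability ratios for $X_i$: the first conditions on a configuration that agrees with $x^{T \cup \{j\}}$ on $V \setminus \{i\}$, and the second on $x^T$. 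These two conditioning configurations differ only in the $j$th coordinate. The local Markov property (Definition \ref{def.MRF}) combined with $j \notin N(i)$ gives $P(X_i \mid X_{V \setminus \{i\}}) = P(X_i \mid X_{N(i)})$, and in particular the conditional is invariant under changes at coordinate $j$. Hence the two log-ratios are equal and each quadruple contributes zero, so $\phi_S \equiv 0$.

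Collecting the surviving terms, $P(x) = \frac{1}{Z}\prod_{C \in \mathbf{C}(G)} \exp(\phi_C(x_C))$, matching Definition \ref{def.Fact} with clique potentials $\phi_C' = \exp(\phi_C)$. The main obstacle is the algebraic step showing that the non-clique $\phi_S$ vanish: one has to package the $2^{|S|}$-term Möbius sum into quadruples and recognize each quadruple as a difference of log conditional densities of $X_i$ under configurations that differ only at a non-neighbor $j$, so that the local Markov property forces cancellation. Everything else (existence of $x^*$, well-definedness of the logarithm, telescoping of the Möbius expansion) is routine given strict positivity.
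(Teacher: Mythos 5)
Your argument is the standard M\"obius-inversion (inclusion--exclusion) proof of the Hammersley--Clifford theorem and it is correct: grouping the sum over $T \subseteq S\setminus\{i,j\}$ into quadruples does reduce each block to a difference of log-conditionals of $X_i$ under two configurations differing only at the non-neighbour $j$, which the local Markov property forces to cancel, and strict positivity covers all the routine steps. Note that the paper does not actually prove Theorem \ref{thm.HC} --- it is cited from the literature --- but the construction you use is exactly the one the authors adapt in Appendix \ref{sec.proof.struct.to.gibbs} to establish the partitioned-ratio analogue (Theorem \ref{thm.struct.to.gibbs}), where the $\Delta_Z$ quantities play the role of your $H(x^T)$ and the same quadruple cancellation is driven by Proposition \ref{prop.pairwise.MV} in place of the local Markov property.
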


\begin{them}[See e.g., \citet{PGM_Koller}]
	\label{thm.gibbs}
	If $P$ is a Gibbs distribution that factorizes over $G$ then $P$ is an MN with respect to $G$.
\end{them}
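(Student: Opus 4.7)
The plan is to prove this classical direction by a direct computation of the conditional density $P(X_i \mid X_{\backslash i})$ from the Gibbs factorization, showing that all factors not involving node $i$ cancel between numerator and denominator, and that the remaining factors depend on $X_i$ only through $X_{N(i)}$.

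First I would split the clique set $\mathbf{C}(G)$ into two disjoint pieces: $\mathbf{C}_i = \{C \in \mathbf{C}(G) : i \in C\}$ and $\mathbf{C}_{\neg i} = \{C \in \mathbf{C}(G) : i \notin C\}$. Using the factorization from Definition \ref{def.Fact}, I would write
\begin{equation*}
P(X) \;=\; \frac{1}{Z}\left(\prod_{C \in \mathbf{C}_i} \phi_C(X_C)\right)\left(\prod_{C \in \mathbf{C}_{\neg i}} \phi_C(X_C)\right).
\end{equation*}
The key structural observation is that every $C \in \mathbf{C}_i$ is a complete subgraph containing $i$, so $C \setminus \{i\} \subseteq N(i)$; therefore the first product is a function only of $X_i$ and $X_{N(i)}$. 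The second product does not involve $X_i$ at all.

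Next I would compute
\begin{equation*}
P(X_i \mid X_{\backslash i}) \;=\; \frac{P(X)}{\sum_{x_i'} P(X_i = x_i', X_{\backslash i})}.
\end{equation*}
Substituting the factorization, the $\prod_{C \in \mathbf{C}_{\neg i}} \phi_C(X_C)$ factor and the $1/Z$ factor are constant in $x_i'$ and hence factor out of the sum in the denominator, cancelling the same terms in the numerator. What remains is
\begin{equation*}
P(X_i \mid X_{\backslash i}) \;=\; \frac{\prod_{C \in \mathbf{C}_i} \phi_C(X_C)}{\sum_{x_i'} \prod_{C \in \mathbf{C}_i} \phi_C(X_{C\setminus\{i\}}, x_i')},
\end{equation*}
which is a function of $X_i$ and $X_{N(i)}$ alone. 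An identical computation starting from $P(X_i, X_{N(i)})$ (obtained by marginalizing $P(X)$ over $X_{\backslash (\{i\}\cup N(i))}$) followed by the same cancellation shows $P(X_i \mid X_{N(i)})$ equals the same expression, giving $P(X_i \mid X_{\backslash i}) = P(X_i \mid X_{N(i)})$ for every $i$, which is the MN property of Definition \ref{def.MRF}.

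There is no serious obstacle here; the proof is essentially a bookkeeping exercise. The one subtlety worth highlighting is the graph-theoretic step that any clique containing $i$ lies inside $\{i\}\cup N(i)$, which is what ensures the surviving product in the conditional depends on the conditioning variables only through the neighborhood $N(i)$. Strict positivity of $P$ (stated in Section \ref{sec.background}) is used implicitly to make the conditional densities and the marginalizations well-defined with no division-by-zero issues.
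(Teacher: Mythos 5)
Your proof is correct. The paper does not actually prove Theorem \ref{thm.gibbs}; it is stated as a classical result with a citation to \citet{PGM_Koller}, so there is no in-paper argument to compare against. Your argument is the standard textbook one: partition the clique set into cliques containing $i$ and cliques not containing $i$, observe that any clique containing $i$ lies inside $\{i\}\cup N(i)$ so the first group of factors depends only on $(X_i, X_{N(i)})$, cancel the remaining factors and $1/Z$ in the conditional, and verify that the same reduced expression arises from $P(X_i \mid X_{N(i)})$. The one step you gloss over slightly is the second computation: when marginalizing $P(X)$ over $X_{\backslash(\{i\}\cup N(i))}$, the product over cliques not containing $i$ integrates to a function of $X_{N(i)}$ alone (it never involves $X_i$), which then cancels in the ratio defining $P(X_i\mid X_{N(i)})$ --- worth writing out, but it causes no difficulty. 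Your remark that strict positivity (assumed in Section \ref{sec.background}) is only needed to keep the conditionals well-defined is also accurate; positivity is essential for the converse direction (Theorem \ref{thm.HC}), not this one.
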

Theorems \ref{thm.HC} and \ref{thm.gibbs} are the keystones of many MN structure learning methods. It states, by learning a sparse factorization of a joint distribution, we are able to spot the structure of a graphical model. However, learning a joint distribution has never been an easy task due to the normalization issue and if the task is to learn a PMN that only concerns conditional independence across two groups, such an approach seems to ``solve a more general task as an intermediate step''\citep{Vapnik1998}.

Does there exist an alternative to the joint distribution, whose factorization relates to the structure of PMN? Ideally, such factorization should be efficiently estimated from samples with a tractable normalization term and the estimation procedure should provide good statistical guarantees.

In the rest of the paper, we show PR has the desired properties to indicate the structure of a PMN: It is factorized over the structure of a PMN (Section \ref{sec.factorization}) and easy to estimate from joint samples (Section \ref{sec.algorithm}) with good statistical properties (Section \ref{sec.consistency}). 
\subsection{Definitions}
\label{sec.preliminaries}
\paragraph{Notations.} Sets are denoted by upper-case letters, e.g., $A,B$. An upper-case with a lower-case subscript $A_i$ means the $i$-th element in $A$. Set operator $A\backslash B$ means excluding set $B$ from set $A$. $\backslash B$ means the whole set excluding the set $B$.  $A = (A1, A2)$ is a partition of set $A$ and an upper-case followed by an integer number, e.g. $A1, A2$ means groups divided by such a partition. Given a graph $L=\langle N,E \rangle$ and a subgraph $K\subseteq L$, $N_K$ or $E_K$ denotes the subset of $N$ or $E$ whose elements are indexed topologically by $K$. Upper-case with bold font, e.g. $\mathbf{K}$,  is a set of sets.

\paragraph{PMN and Gibbs Partitioned Ratio.}
\begin{figure}[t]
	\centering
	\includegraphics[width=.6\textwidth]{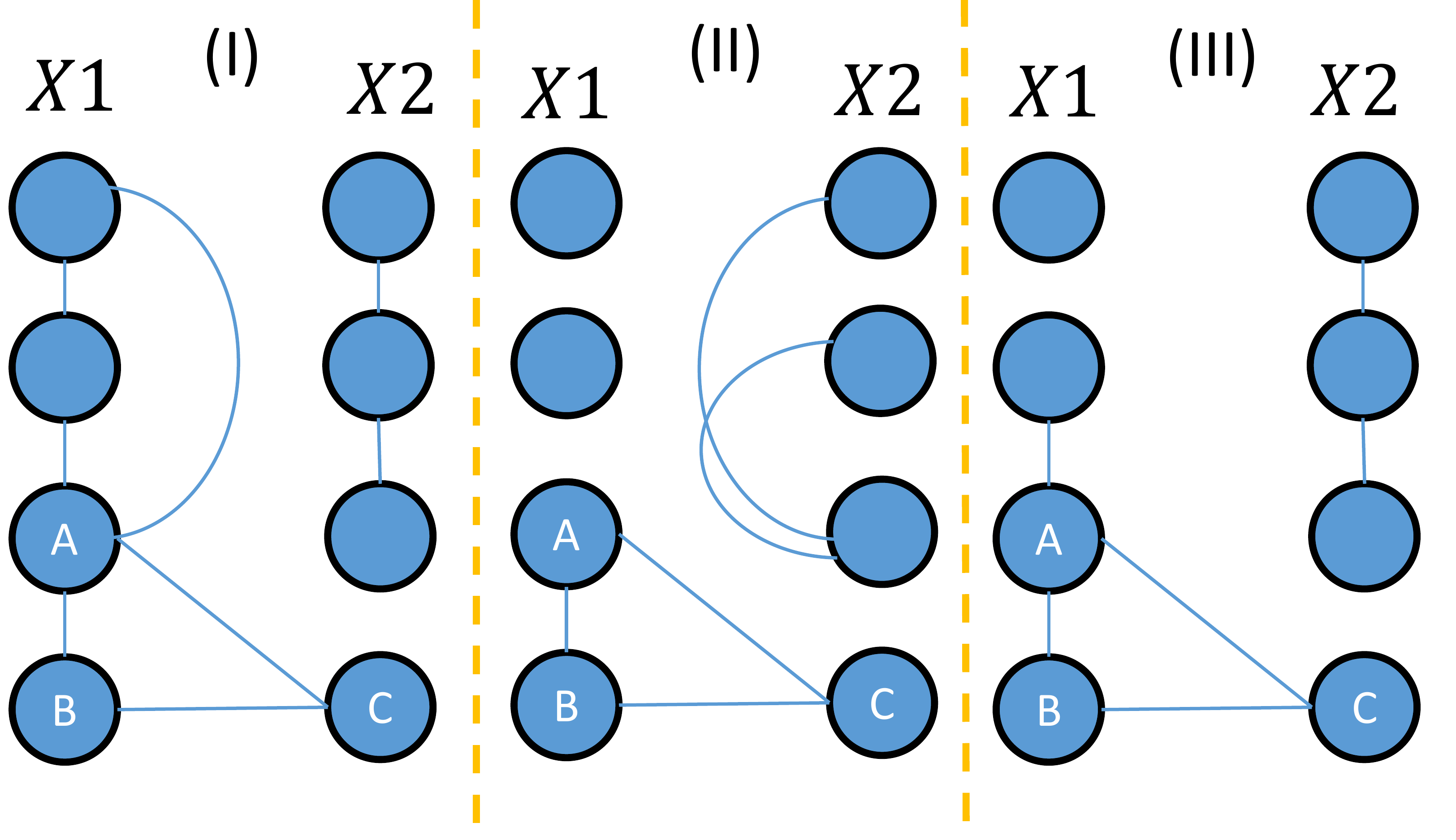}
	\caption{If (I) is an MN over $X$, then (I), (II), (III) are all PMNs over $X$.
	If (I) is a PMN over $X$, (I), (II), (III) are not necessarily the MN over $X$ (but still PMNs over $X$).}
	\label{fig.MNvsPMN}
\end{figure}
\begin{figure}[t]
	\centering	\includegraphics[width=.6\textwidth]{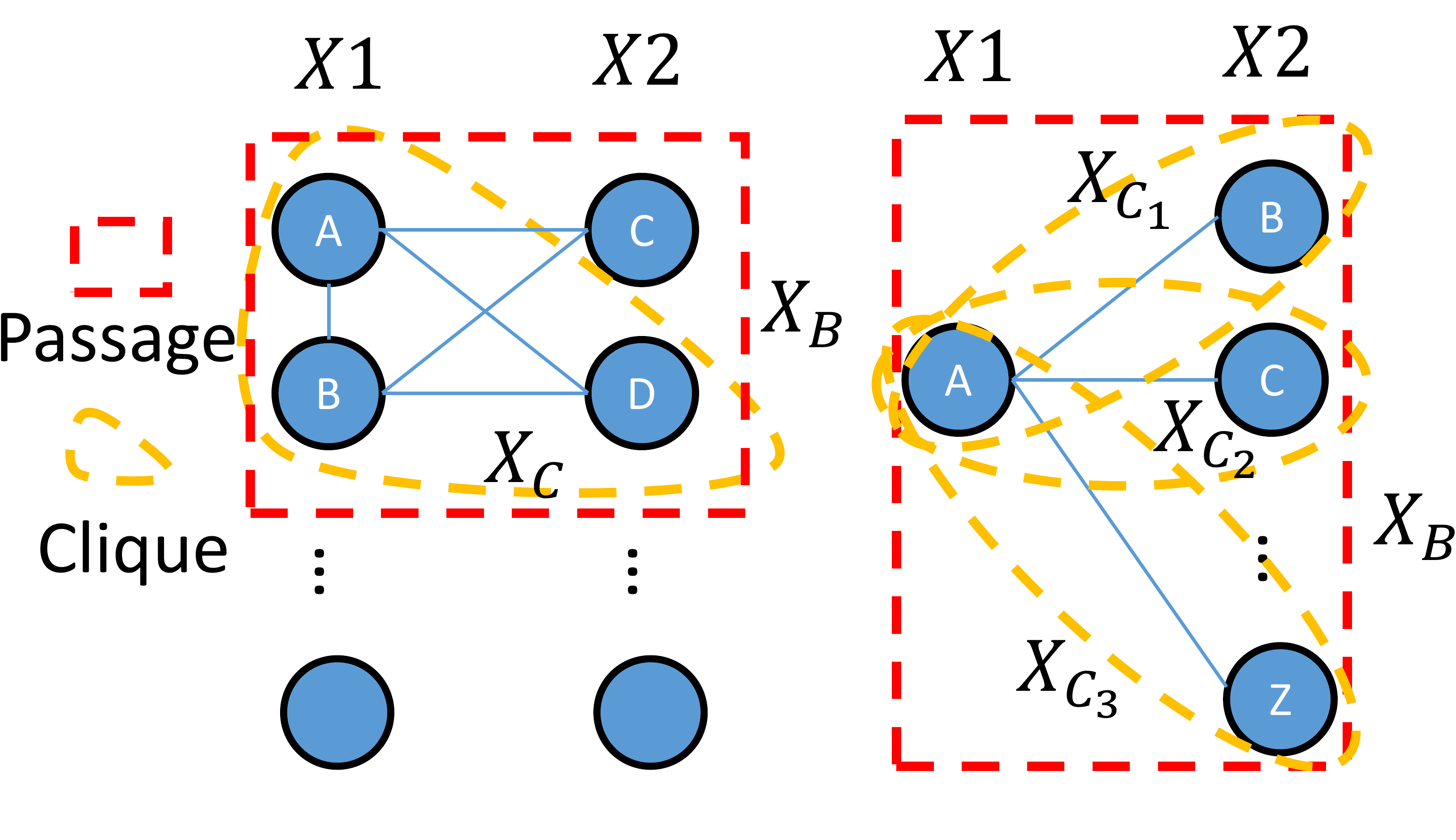}
	\caption{(Left) ABCD and (Right) AB\dots Z are two passages. }
	\label{fig.passage}
\end{figure}
Now, we formally define a graph $G = \langle X, E \rangle$, where $X$ is a set of random variables and $X = (X1, X2)$, i.e. $X1 \cap X2 = \emptyset, X1 \cup X2 = X$ and $X1, X2 \neq \emptyset$. The concept of PMN can now be defined.
\begin{defini}[PMN]
	\label{def.PMN}
	For a joint probability $P(X)$, $X = (X1, X2)$, if
	\begin{align}
	\label{eq.markov.1}
	P(X_i | X1 \cup X_{N(i)} \backslash X_i) = P(X_{i} | \backslash X_{i}), \forall X_i \in X1,\\
	\label{eq.markov.2}
	P(X_i | X2 \cup X_{N(i)} \backslash X_i) = P(X_{i} | \backslash X_{i}), \forall X_i \in X2,
	\end{align}
	then $P$ is a PMN with respect to $G$.
\end{defini}
The following proposition is a consequence of Definition \ref{def.PMN}, and an example is visualized in Figure \ref{fig.MNvsPMN}.
\begin{prop}
	If $P$ is an MN with respect to $G$, then $P$ is a PMN with respect to $G$, but not vice versa.
\end{prop}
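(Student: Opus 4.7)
The proposition has two directions, handled separately.

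\textbf{Forward direction (MN $\Rightarrow$ PMN).} The plan is to fix an arbitrary $X_i \in X1$ (the case $X_i \in X2$ is symmetric) and derive \eqref{eq.markov.1} from Definition \ref{def.MRF}. Set $A = X1 \cup X_{N(i)} \setminus X_i$ and $B = (\backslash X_i) \setminus A$, so that $A \cup B = \backslash X_i$ and, crucially, $X_{N(i)} \subseteq A$. Marginalising $B$ out,
$$P(X_i \mid A) \;=\; \sum_{B} P(X_i \mid A, B)\, P(B \mid A).$$
By Definition \ref{def.MRF}, $P(X_i \mid A \cup B) = P(X_i \mid \backslash X_i) = P(X_i \mid X_{N(i)})$, and since $X_{N(i)} \subseteq A$ this factor is independent of $B$. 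Pulling it out of the sum gives $P(X_i \mid A) = P(X_i \mid X_{N(i)}) = P(X_i \mid \backslash X_i)$, which is exactly \eqref{eq.markov.1}. For continuous variables the sum is replaced by an integral, and strict positivity of $P$ guarantees all the conditionals are well-defined.

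\textbf{Converse fails.} The plan is to exhibit a counterexample in the spirit of Figure \ref{fig.MNvsPMN}. The key structural observation is that the PMN conditioning set can be rewritten as
$$X1 \cup X_{N(i)} \setminus X_i \;=\; (X1 \setminus X_i) \cup \bigl(X_{N(i)} \cap X2\bigr),$$
so it depends on the graph only through the \emph{cross-group} neighbors of $X_i$, not through the intra-group edges. Concretely, I would take $G$ to be the complete graph on $X$ and let $P$ be a generic (e.g., fully dependent) distribution, so that $P$ is trivially an MN with respect to $G$. Then obtain $G'$ from $G$ by deleting one intra-group edge within $X1$. The identity above shows that the PMN conditioning sets for $G$ and $G'$ coincide, so $P$ remains a PMN with respect to $G'$. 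On the other hand, the MN condition with respect to $G'$ now demands a genuine conditional independence between the endpoints of the deleted edge given all other variables, which a generic $P$ violates.

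\textbf{Expected obstacles.} Both directions are elementary; the only care needed is set-theoretic bookkeeping --- making sure that $X_{N(i)} \subseteq A$ in the forward argument, and that the intra-group edge one deletes for the converse really does leave every PMN conditioning set untouched while creating a genuinely new required independence. The probabilistic content is light: the forward step is the standard fact that conditioning on any superset of the neighborhood preserves the Markov conditional, and the converse is by explicit construction.
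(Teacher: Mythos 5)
Your proof is correct and takes essentially the approach the paper intends: the paper gives no written proof, only the remark that the proposition is a consequence of Definition \ref{def.PMN} together with Figure \ref{fig.MNvsPMN}, whose point is precisely your observation that the PMN conditioning sets depend only on cross-group neighbors and are therefore insensitive to intra-group edges. Your forward direction (the standard fact that conditioning on a superset of the Markov blanket preserves the conditional) and your edge-deletion counterexample are a rigorous fleshing-out of exactly that picture.
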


\begin{prop}
	\label{prop.pairwise.MV}
	If $P$ is a PMN with respect to $G$, $X_u \in X1, X_v \in X2$, and $v \not \in N(u)$,  then $X_u \independent X_v | \backslash \{X_u, X_v\}$.
\end{prop}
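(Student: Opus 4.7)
The plan is to derive the pairwise conditional independence directly from the local PMN condition \eqref{eq.markov.1} at $X_u$, using the weak union property of conditional independence. This mirrors the classical derivation that, under positivity, the local Markov property for an MN implies the pairwise Markov property.

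First, I would rewrite \eqref{eq.markov.1} evaluated at the node $X_u \in X1$ as a conditional independence. Letting $C = (X1 \cup X_{N(u)}) \setminus \{X_u\}$ and $D = (\backslash X_u) \setminus C = X2 \setminus X_{N(u)}$, the equality $P(X_u \mid C) = P(X_u \mid C \cup D)$ is precisely the statement $X_u \independent D \mid C$: conditional on $C$, the variables $D$ carry no additional information about $X_u$.

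Next, because $v \not\in N(u)$ and $X_v \in X2$, the variable $X_v$ lies in $D$. Splitting $D = \{X_v\} \cup D'$ with $D' = D \setminus \{X_v\}$ and invoking the weak union rule on $X_u \independent D \mid C$, I obtain $X_u \independent X_v \mid C \cup D'$. To finish, I would verify that $C \cup D' = \backslash \{X_u, X_v\}$: partitioning $X_{N(u)} = (X_{N(u)} \cap X1) \cup (X_{N(u)} \cap X2)$ and using that $X_v \not\in X_{N(u)}$, the within-group neighbors $X_{N(u)} \cap X1$ are absorbed into $X1 \setminus \{X_u\}$, while the cross-group neighbors $X_{N(u)} \cap X2$ merge with $(X2 \setminus X_{N(u)}) \setminus \{X_v\}$ into $X2 \setminus \{X_v\}$. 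This yields $C \cup D' = (X1 \setminus \{X_u\}) \cup (X2 \setminus \{X_v\}) = \backslash \{X_u, X_v\}$, as desired.

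The argument is essentially mechanical; there is no substantive obstacle, since global positivity of $P$ (assumed at the start of Section \ref{sec.background}) legitimises the conditional probability manipulations, and weak union is valid without further hypotheses. The only point requiring attention is the final set identity $C \cup D' = \backslash \{X_u, X_v\}$, which hinges on $X_u \in X1$, $X_v \in X2$ and $X_v \notin X_{N(u)}$.
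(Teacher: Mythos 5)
Your proposal is correct and follows essentially the same route as the paper's proof: both convert the local PMN condition \eqref{eq.markov.1} at $X_u$ into the conditional independence $X_u \independent X2 \setminus X_{N(u)} \mid (X1 \cup X_{N(u)}) \setminus \{X_u\}$ and then apply the weak union rule to isolate $X_v$, with the concluding set identity giving the conditioning set $\backslash \{X_u, X_v\}$. The only cosmetic difference is that the paper manipulates an explicit ratio of conditional probabilities where you invoke the equivalence $P(X_u \mid C) = P(X_u \mid C \cup D) \iff X_u \independent D \mid C$ directly.
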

See Appendix \ref{proof.prop.pairwise} for the proof.

The concept of \emph{Passage} is defined as follows:
\begin{defini}[Passage]
	Let $X = (X1, X2)$. We define a \textbf{passage} $B$ of $G$ as a subgraph of $G$, such that $X_B \cap X1 \neq \emptyset, X_B \cap X2 \neq \emptyset, $ and $\forall X_u \in (X1\cap X_B), \forall X_v \in (X2\cap X_B), $ we have edge $(X_u, X_v) \in E_B.$
\end{defini}
Here we highlight two of the passage structures of two graphs in Figure \ref{fig.passage}. 

From definition, we can see all cliques that go across two groups are passages, but not all passages are cliques:
\begin{prop}
	Let $X = (X1, X2)$. Given a passage $B$ of $G$, $B$ is a complete subgraph if and only if $\forall X_u, X_v \in X_B \cap X1,$ edge $(X_u, X_v) \in E_B$ and $\forall X_u, X_v \in X_B \cap X2, $ edge $(X_u, X_v) \in E_B.$
\end{prop}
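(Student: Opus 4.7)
The plan is to unpack the definitions and run a three-case partition argument on pairs of vertices in $X_B$. Since $X = (X1, X2)$ is a disjoint partition of the vertex set with $X1 \cap X2 = \emptyset$ and $X1 \cup X2 = X$, the induced collection $X_B$ also partitions disjointly into $X_B \cap X1$ and $X_B \cap X2$. So for any pair of distinct vertices $X_u, X_v \in X_B$, exactly one of the following holds: (i) both lie in $X_B \cap X1$, (ii) both lie in $X_B \cap X2$, or (iii) one lies in $X_B \cap X1$ and the other in $X_B \cap X2$.

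For the forward direction, assume $B$ is a complete subgraph of $G$. Then by definition every pair of distinct vertices in $X_B$ is joined by an edge in $E_B$. Specializing to pairs drawn entirely from $X_B \cap X1$ or entirely from $X_B \cap X2$ yields the two conditions in the statement.

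For the reverse direction, assume the two within-group conditions. Because $B$ is a passage, the defining property gives $(X_u, X_v) \in E_B$ for every cross-group pair $X_u \in X_B \cap X1$, $X_v \in X_B \cap X2$, so case (iii) is handled automatically. Cases (i) and (ii) are precisely the hypothesized conditions. Together these cover every pair in $X_B$, so $B$ is a complete subgraph.

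I do not expect any real obstacle here; the proposition is essentially a tautology after one notices that the passage property covers exactly the cross-group edges, while completeness additionally demands the within-group edges. The only care needed is to make sure the partition $X_B = (X_B \cap X1) \sqcup (X_B \cap X2)$ is used explicitly so that the case split is exhaustive.
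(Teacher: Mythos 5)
Your proof is correct and is exactly the definitional case split the paper has in mind; the paper itself states this proposition without proof, treating it as immediate from the definitions of passage and complete subgraph. Nothing to add.
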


As an analogy to a Gibbs distribution used in the Hammersley-Clifford Theorem, we define the Gibbs partitioned ratio.
\begin{defini}[Gibbs Partitioned Ratio]
	\label{def.GibbsMI}
	For a joint distribution $P$ over $X = (X1, X2)$, if the partitioned ratio has the form
	$$
	\frac{P(X1, X2)}{P(X1) P(X2)} = \frac{1}{Z}\prod_{B \in \mathbf{B}(G)} \phi_B(X_B),
	$$
	where $\mathbf{B}(G)$ is the set of all passages in $G$, then $\frac{P(X_{1}, X_{2})}{P(X1) P(X_{2})}$ is called the Gibbs partitioned ratio (GPR) over $G$. 
\end{defini}

\section{Factorization over Passages}
\label{sec.factorization}
In this section, we will investigate the question: can we have a similar factorization theorem like Theorems \ref{thm.HC} and \ref{thm.gibbs} for PMN? If so, learning the sparse factorization of PR may reveal the Markovian properties among random variables.
\subsection{Fundamental Properties}
There are two steps for introducing our factorization theorems.
The first step is establishing the Markovian property of random variables using the factorization of PR.
\begin{them}
	\label{thm.gibbs.to.struct} 
		Given $X=(X1, X2)$, if PR $\frac{P(X1, X2)}{P(X1) P(X2)}$ is a GPR over a graph $G$ then 
		$P$ is a PMN with respect to $G$.
\end{them}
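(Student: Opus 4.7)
The plan is to start from the GPR factorization and derive the two PMN Markov conditions by computing $P(X_i\mid\backslash X_i)$ explicitly and showing it depends only on the variables allowed by the PMN definition.

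Concretely, by the hypothesis I can write
\[
P(X) \;=\; \frac{1}{Z}\,P(X1)\,P(X2)\,\prod_{B\in\mathbf{B}(G)} \phi_B(X_B).
\]
Fix $X_i \in X1$ (the case $X_i\in X2$ is entirely symmetric). First I would compute
\[
P(X_i\mid \backslash X_i) \;=\; \frac{P(X)}{\int P(X)\,\mathrm{d}X_i}.
\]
The factor $P(X2)$ and the constant $1/Z$ do not depend on $X_i$ and therefore cancel between numerator and denominator. Splitting the passage product into $\prod_{B:\, i\in X_B}\phi_B(X_B)\cdot\prod_{B:\, i\notin X_B}\phi_B(X_B)$, the second product also cancels. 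Finally, writing $P(X1)=P(X_i\mid X1\backslash X_i)\,P(X1\backslash X_i)$ lets the marginal $P(X1\backslash X_i)$ cancel as well, leaving
\[
P(X_i\mid \backslash X_i)
\;=\;
\frac{P(X_i\mid X1\backslash X_i)\prod_{B:\,i\in X_B}\phi_B(X_B)}
     {\int P(X_i\mid X1\backslash X_i)\prod_{B:\,i\in X_B}\phi_B(X_B)\,\mathrm{d}X_i}.
\]

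The key structural step is to identify exactly which variables the right-hand side depends on. The factor $P(X_i\mid X1\backslash X_i)$ depends only on $X1$. For each passage $B$ containing $X_i$, the definition of a passage forces every $X_v\in X_B\cap X2$ to be joined to $X_i$ by an edge in $E_B\subseteq E$, so $X_B\cap X2 \subseteq X_{N(i)}\cap X2$. Consequently every factor in the numerator (and hence in the normalized ratio) is a function of variables in $(X1\backslash X_i)\cup (X_{N(i)}\cap X2) = (X1\cup X_{N(i)})\backslash X_i$. Thus $P(X_i\mid \backslash X_i)$ is a measurable function of $X_i$ and of the variables in $X1\cup X_{N(i)}\backslash X_i$ only.

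To finish I would invoke the standard measure-theoretic fact that if $P(X_i\mid A\cup B)$ does not depend on $B$, then $P(X_i\mid A\cup B) = P(X_i\mid A)$, obtained by taking the conditional expectation given $A$. Applying this with $A = X1\cup X_{N(i)}\backslash X_i$ and $B=\backslash X_i\setminus A$ yields $P(X_i\mid \backslash X_i) = P(X_i\mid X1\cup X_{N(i)}\backslash X_i)$, which is exactly \eqref{eq.markov.1}; the symmetric computation gives \eqref{eq.markov.2}. The only delicate point is the passage-to-neighborhood inclusion $X_B\cap X2\subseteq X_{N(i)}$, which is where the defining ``complete bipartite'' property of a passage is used; beyond that the argument is a bookkeeping exercise of cancellations in the conditional density.
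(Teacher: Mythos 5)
Your proof is correct and follows essentially the same route as the paper's: both rely on the GPR factorization, cancel the factors not involving $X_i$, and hinge on the same key observation that any passage containing $X_i\in X1$ can only touch $X2$-variables that are neighbors of $X_i$ (the paper phrases this as $X_{\mathbf{B}(i)}\cap(X_{\backslash N(i)}\cap X2)=\emptyset$). The only cosmetic difference is the direction of the computation: the paper expands $P(X_i\mid X1\cup X_{N(i)}\backslash X_i)$ and shows it collapses to $P(X_i\mid\backslash X_i)$, whereas you expand $P(X_i\mid\backslash X_i)$, show it is measurable with respect to the smaller conditioning set, and finish with the tower property.
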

See Appendix \ref{sec.proof.gibbs.to.struct} for the proof.

Next, let us prove the other direction: From the Markovian property to the factorization.
\begin{them}
	\label{thm.struct.to.gibbs}
	Given  $X = (X1, X2)$, if $P$ is a PMN with respective to a graph $G$,
	then $\frac{P(X1,X2)}{P(X1)P(X2)}$ is a GPR over $G$.
\end{them}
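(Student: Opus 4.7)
The plan is to mimic the Möbius--inversion proof of the classical Hammersley--Clifford theorem (Theorem \ref{thm.HC}), but applied to $\log r$ instead of $\log P$. Fix a reference configuration $x^{*}\in\calX$ with $P(x^{*})>0$ and, for each $S\subseteq X$, define
\[
\phi_S(X)\;=\;\sum_{T\subseteq S}(-1)^{|S\setminus T|}\log r\bigl(X_T, x^{*}_{\backslash T}\bigr),
\]
where $(X_T,x^{*}_{\backslash T})$ denotes the partial configuration that equals $X$ on $T$ and $x^{*}$ elsewhere. Möbius inversion then gives $\log r(X)=\sum_{S\subseteq X}\phi_S(X)$, so the theorem reduces to showing $\phi_S\equiv 0$ whenever $S$ is not a passage of $G$: the $S=\emptyset$ piece will play the role of $-\log Z$, and the surviving passage terms will become the $\log\phi_B$ of Definition \ref{def.GibbsMI}.

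First I would dispatch the case where $S$ meets both groups but contains a pair $u\in S\cap X1$, $v\in S\cap X2$ with $(u,v)\notin E_G$. Split $\log r=\log P-\log P_1-\log P_2$ and correspondingly $\phi_S=A-B-C$. For $A$, pair each $T\subseteq S$ with $T\triangle\{u,v\}$ and collect the contributions into four--term blocks; every block vanishes thanks to the pairwise independence $X_u\independent X_v\mid\backslash\{X_u,X_v\}$ provided by Proposition \ref{prop.pairwise.MV}, exactly as in the classical HC pairing argument. For $B$ and $C$, writing $T=T_1\cup T_2$ with $T_i\subseteq S\cap Xi$, the $X1$--marginal log-density depends only on $T_1$, so the inner sum $\sum_{T_2\subseteq S_2}(-1)^{|S_2\setminus T_2|}=0$ (since $S_2\neq\emptyset$) annihilates $B$; by symmetry $C=0$ as well.

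The remaining case is $S$ entirely inside one group, say $S\subseteq X1$ with $S\neq\emptyset$, and here I would invoke the extended graph $G^{*}$ obtained from $G$ by adding every within-$X1$ and every within-$X2$ edge. By Definition \ref{def.PMN}, the PMN hypothesis on $G$ is precisely the local Markov property of $P$ on $G^{*}$, so Theorem \ref{thm.HC} yields a Gibbs factorization $P(X)=\tfrac{1}{Z}f(X1)g(X2)\prod_{B}\psi_B(X_B)$ in which the cross-group cliques of $G^{*}$ coincide with the passage subsets of $G$. Substituting this factorization into $\log r(X_T,x^{*}_{\backslash T})$ for $T\subseteq S$, the $\log f$ piece cancels against the corresponding portion of $\log P_1$, and the passage sums $\sum_B\log\psi_B(X_{T\cap B})$ are annihilated by the standard identity $\sum_{T'\subseteq S}(-1)^{|S\setminus T'|}h(X_{T'\cap D})=0$ whenever $D\subsetneq S$; this kills every passage whose $X1$--footprint fails to cover all of $S$ and shows that the remaining purely $X1$--dependent marginal normalizer is forced, by the same Möbius identity, to be redistributed into the surviving passage potentials. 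This redistribution of the within-group normalizers is the crux of the argument and the main obstacle; once it is in place, combining the two cases yields $\log r(X) = -\log Z+\sum_{B\in\mathbf{B}(G)}\log\phi_B(X_B)$, which is exactly the GPR factorization of Definition \ref{def.GibbsMI}.
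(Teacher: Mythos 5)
Your overall strategy---M\"obius inversion against a reference configuration $x^*$, followed by the classical four-term pairing---is the same device the paper's proof uses (the paper works multiplicatively with potentials $\phi_S=\prod_{Z\subseteq S}\Delta_Z^{(-1)^{|S|-|Z|}}$; you work with $\log r$). Your handling of the two-sided case is sound and arguably cleaner than the paper's: the $\log P$ blocks vanish by Proposition \ref{prop.pairwise.MV}, and the $\log P_1$, $\log P_2$ terms are killed by the alternating sum over the non-empty factor $S\cap X2$ (resp.\ $S\cap X1$), whereas the paper folds the marginals into the four-term identity \eqref{eq.passage}.

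The genuine gap is the one-sided case, which you correctly identify as the crux but do not close. For $\emptyset\neq S\subseteq X1$ your potential reduces, after the constant $\log P_2$ contribution cancels, to $\phi_S=\sum_{T\subseteq S}(-1)^{|S\setminus T|}\log P(x^*_{X2}\mid X_T,x^*_{X1\setminus T})$, and this does \emph{not} vanish in general: marginalizing out $X2$ manufactures interactions among $X1$-variables that are linked only \emph{through} $X2$. The proposed repair cannot be carried out either, because with $P=\tfrac{1}{Z}f(X1)g(X2)\prod_B\psi_B(X_B)$ one has $P(X1)=\tfrac{1}{Z}f(X1)\int g(X2)\prod_B\psi_B(X_B)\,\mathrm{d}X2$, so the leftover factor $f(X1)/P(X1)$ is the reciprocal of an integral that couples \emph{all} $X1$-endpoints of \emph{all} passages jointly; it need not split along individual passages, and the $X1$-sets it couples need not be contained in any single passage. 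Concretely, take $X1=\{1,2\}$, $X2=\{3,4\}$ and a zero-mean Gaussian MN with edges $(1,3)$, $(2,4)$, $(3,4)$: the only passages are $\{1,3\}$ and $\{2,4\}$, yet a Schur-complement computation gives $\partial^2\log r/\partial x_1\partial x_2=\theta_{13}\theta_{24}\theta_{34}/(\theta_{33}\theta_{44}-\theta_{34}^2)\neq 0$, so your $\phi_{\{1,2\}}$ is genuinely non-zero and there is nowhere to redistribute it. (The paper's construction sidesteps this term by \emph{defining} $\Delta_Z\equiv 1$ on every $Z$ containing no passage, which nullifies all one-sided potentials by fiat, at the cost that the cancellation in \eqref{eq.passage} then tacitly assumes all four $\Delta$'s take the ratio form. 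Either way, the within-group interactions created by the marginals are the real obstruction, and your argument as written does not surmount it.)
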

See Appendix \ref{sec.proof.struct.to.gibbs} for the proof.

Simply, the factorization of a GPR is only related to the ``linkage'' (or rigorously, passages) between two groups. Interestingly, if we have an MN whose groups are linked via a few ``bottleneck'' passages, then the factorization is simply over those sparse passages, no matter how densely the graph are connected within each group. This gives PMN a significant advantage over traditional MN in terms of modelling: If the interactions between groups are simple (e.g. linear), we do not need to care the interactions within groups, even if they are highly complicated (e.g. non-linear).
For example, in the bipartisan analysis problem, a PR over congresspersons can be represented only via a few cross-party links, and a large chunk of connections between congresspersons within their own party can be ignored, no matter how complicated they are. 

Theorems \ref{thm.gibbs.to.struct} and \ref{thm.struct.to.gibbs} point out a promising direction for structural learning of a PMN: Once the sparse factorization of a GPR is learned, we are able to recover the \emph{sparse passages} of a PMN partitioned into two groups. 
%

\subsection{Simplification of Passage Factorization}
The Hammersley-Clifford theorem (Theorem \ref{thm.HC})
 shows $P$ factorizes over cliques of $G$, given $P$ is an MN with respect to $G$. However, if one does not know the maximum size of cliques, the model of a probability function has to consider factors on all potential cliques, i.e.,  all subsets of $X$. It is unrealistic to construct a model with $2^{|X|}$ factors under the high-dimensional setting. 
 
 Therefore, a popular assumption called ``pairwise MN'' \citep{PGM_Koller,Murphy:2012:MLP:2380985} has been widely used to lower the computational burden of MN structure learning. It assumes that in $P$, all clique factors can be further recovered using only bivariate and univariate components which give rise to a pairwise model with only $(|X|^2+|X|)/2$ factors.
 Some well known MNs, such as Gaussian MN and Ising model are all examples of pairwise MNs.

Similar issues also happen when modelling GPR. There are $(2^{|X1|} - 1)(2^{|X2|} - 1)$ possible passage potentials for the set of random variables $X = (X1, X2)$.
Following the same spirit, we can consider a simplified model of PR by assuming that all passage potentials of the GPR must factorize in a pairwise fashion, i.e.:
\begin{defini}[Pairwise PR]
	\label{def.PGPR}
	For a joint distribution $P$ over $X = (X1, X2)$, if the partitioned ratio has the form
	\begin{align*}
		\frac{P(X1, X2)}{P(X1) P(X2)} &= \frac{1}{Z}\prod_{B \in \mathbf{B}(G)} \phi_B(X_B)\\
		&= \frac{1}{Z} \prod_{B \in \mathbf{B}(G)} \prod_{X_u, X_v \in X_B, u\le v} h_{u,v}(X_u, X_v),
	\end{align*}
	then $\frac{P(X_{1}, X_{2})}{P(X1) P(X_{2})}$ is called the pairwise Gibbs partitioned ratio (pairwise PR) over $G$. 
\end{defini}
If we can assume the GPR we hope to learn is also a pairwise PR, the model may only contain $(|X|^2 + |X|)/2$ pairwise factors, and is much easier to construct. 

In fact, pairwise PR does not have straightforward relationship with pairwise MN, i.e., a PR of a pairwise MN may not be a pairwise PR, meanwhile the joint distribution corresponding to a pairwise PR may not be a pairwise MN, since the pairwise MN and the pairwise PR apply the same assumption on the parameterizations of two fundamentally different quantities, the joint probability and the PR respectively. 

Whether one should impose such an assumption on joint probability or PR is totally up to the application, as neither parameterization is  always superior to the other. If the application focuses on learning the connections between two groups, we believe imposing such an assumption on PR directly is more sensible.

However, as a special case, a joint Gaussian distribution is a pairwise MN, and its PR is also a pairwise PR.
\begin{prop}
	If $P$ over $X = (X1, X2)$ is a zero-mean Gaussian distirbution, then the PR $\frac{P(X1,X2)}{P(X1)P(X2)}$ is a pairwise PR. 
\end{prop}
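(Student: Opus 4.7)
The plan is to evaluate the Gaussian PR in closed form, recognize it as the exponential of a quadratic form in $(X1,X2)$, and then repackage the resulting pairwise monomials as a pairwise PR over a suitably large graph $G$.

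First, I would block-partition the precision matrix $\Omega$ and its inverse $\Sigma = \Omega^{-1}$ along $(X1,X2)$. Because the marginals of a zero-mean Gaussian are themselves zero-mean Gaussians with precisions $\Sigma_{11}^{-1}$ and $\Sigma_{22}^{-1}$, subtracting the log-densities of $P(X1)$ and $P(X2)$ from $\log P(X1,X2)$ collapses the log-PR to
$$\tfrac{1}{2}X1^{\top}(\Sigma_{11}^{-1}-\Omega_{11})X1 + \tfrac{1}{2}X2^{\top}(\Sigma_{22}^{-1}-\Omega_{22})X2 - X1^{\top}\Omega_{12}X2 + \mathrm{const}.$$
This is a sum of pairwise monomials $c_{uv}X_uX_v$ over unordered pairs $(u,v)$, so the Gaussian PR equals, up to a multiplicative constant, a product of pairwise exponentials $\exp(c_{uv}X_uX_v)$.

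Next, I would take $G$ to be the complete graph on $X$, which is always a valid PMN graph for any $P$. Identifying each passage with its vertex set, every subset $S \subseteq X$ meeting both $X1$ and $X2$ is a passage of $G$, since the required cross-edges are automatically present. For each pair $(u,v)$ let $n_{uv}$ count the passages containing $\{u,v\}$; a short combinatorial check gives $n_{uv} \ge 1$ whenever both groups are nonempty (which is built into the partition hypothesis). Setting $h_{u,v}(X_u,X_v) = \exp\bigl((c_{uv}/n_{uv})\,X_uX_v\bigr)$, the double product $\prod_{B \in \mathbf{B}(G)} \prod_{u,v \in X_B,\, u \le v} h_{u,v}$ multiplies each pairwise factor exactly $n_{uv}$ times, reconstructing $\exp(\sum c_{uv}X_uX_v)$; the residual scalar is absorbed into the $1/Z$ of Definition \ref{def.PGPR}.

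The only substantive computation is the opening Gaussian block manipulation, which is routine; the remainder is bookkeeping. The sole subtlety lies in verifying $n_{uv} > 0$ for every pair so that the rescaling $c_{uv}/n_{uv}$ is well-defined, and this is immediate from completeness of $G$ together with the nonemptiness of both groups.
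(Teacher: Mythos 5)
Your opening computation is correct, and it in fact exhibits exactly the pairwise factors $h_{u,v}(X_u,X_v)=\exp(\theta_{u,v}\cdot X_uX_v)$ that the paper's remark alludes to, with the coefficients read off from $-\Omega_{12}$ on the cross block and from $\Sigma_{11}^{-1}-\Omega_{11}=-\Omega_{12}\Omega_{22}^{-1}\Omega_{21}$ (and its mirror image) on the within-group blocks. The gap is in the graph-theoretic packaging. Definition \ref{def.PGPR} defines a pairwise PR \emph{over a graph $G$}, and in context --- the sparsity discussion that follows the proposition, and the paper's own argument, which routes through the potential construction \eqref{eq.delta} from the proof of Theorem \ref{thm.struct.to.gibbs} --- the relevant $G$ is the one with respect to which $P$ is a PMN; that is what ties the support of $\boldtheta^*$ to the passage structure and underpins the estimator of Section \ref{sec.algorithm}. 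By taking $G$ to be the complete graph you prove only that the PR is a pairwise PR over \emph{some} graph, which holds for any distribution whose log-PR is a sum of bivariate functions and says nothing about which factors $h_{u,v}$ are trivial. The $c_{uv}/n_{uv}$ bookkeeping is sound but is doing no real work once $G$ is complete.

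The substantive content you would need is that every pair with a nonzero coefficient in your quadratic form lies inside a common passage of the true $G$. For the cross block this is immediate: $c_{uv}=-\Omega_{uv}$ for $u\in X1$, $v\in X2$, a cross pair lies in a common passage exactly when $(X_u,X_v)\in E$, and $\Omega_{uv}=0$ otherwise. But your own formula shows the within-group coefficients are the entries of $-\Omega_{12}\Omega_{22}^{-1}\Omega_{21}$, which are generically nonzero for any $u,u'\in X1$ that each possess \emph{some} cross edge, not necessarily a common neighbour on the other side of the partition; assigning these terms to passage potentials of the true $G$ (or showing they cancel when no such passage exists) is precisely the step your argument skips, and it is the delicate point of the whole factorization. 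The paper instead specializes the inclusion--exclusion construction of Appendix \ref{sec.proof.struct.to.gibbs}: for a zero-mean Gaussian every $\Delta_Z$ in \eqref{eq.delta} is the exponential of a quadratic form, hence every passage potential $\phi_B$ is too, and therefore splits into pairwise exponentials over pairs inside $X_B$. To close the gap you should either reproduce that construction for the Gaussian case or show directly how $\Omega_{12}\Omega_{22}^{-1}\Omega_{21}$ decomposes along the passages of the PMN graph; as written, your proof establishes a strictly weaker statement than the one the paper needs.
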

Since the Gaussian distribution factorizes over pairwise potentials, and the marginal distribution $P(X1)$ and $P(X2)$ are still Gaussian distributions. From the construction of the potential function \eqref{eq.delta} in the proof of Theorem \ref{thm.struct.to.gibbs}, we can verify this statement. Moreover, one can show it has the pairwise factor $h_{u,v}(X_u, X_v ) = \exp(\theta_{u,v} \cdot X_uX_v)$, where $\theta_{u,v}$ is the parameter.

This pairwise assumption together with factorization theorems motivate us to recover the structure of PMN by learning a sparse pairwise PR model: For any $ X_u \in X1, X_v \in X2$, if $X_u, X_v$ appear in the same pairwise factor of a PR model, they must be at least involved in one of the passage potentials.

%
%
%
\section{Estimating PR from Samples}
\label{sec.algorithm}


To estimate PR using such a model, we require a set of samples \[\{ \boldx^{(i)}\}_{i=1}^n \iid P, ~~~ \boldx \in \mathbb{R}^m,\] and each sample vector $\boldx^{(i)}$ is a joint sample, i.e. $\boldx^{(i)} = \left(\boldx_1^{(i)}, \boldx_2^{(i)}\right)$ where $\boldx_1, \boldx_2$ are subvectors corresponding to two groups. 

We define a log-linear pairwise PR model $g(\boldx;\boldtheta)$:
\begin{align*}
g(\boldx;\boldtheta) := \frac{1}{N(\boldtheta)}\exp[\sum_{ u\le v} \boldtheta_{u,v }^\top \boldpsi(\boldx_{u,v})],
\end{align*}
where $\boldtheta_{u,v} \in \mathbb{R}^b$ is a column vector,
\[\boldtheta= (\boldtheta^{\top}_{1,2},\ldots, \boldtheta^{\top}_{1,m},\boldtheta^{\top}_{2,3},\ldots,\boldtheta^{\top}_{2,m},\ldots,\boldtheta^{\top}_{m-1,m})^\top,\] and $\boldpsi$ is a vector valued feature function $\boldpsi: \mathbb{R}^2 \rightarrow \mathbb{R}^b$. 
Notice that we still have to model all pairwise features in $\boldx$, but the vast majority of these pairs are going to be nullified due to Theorem \ref{thm.struct.to.gibbs} if links between two groups are sparse. 

$N(\boldtheta)$ is defined as a normalization function of $g(\boldx;\boldtheta)$:
\begin{align}
\label{eq.norm.popu}
N(\boldtheta) := \int{ p(\boldx_1)p(\boldx_2) \exp[\sum_{ u\le v} \boldtheta_{u,v }^\top \boldpsi(\boldx_{u,v})]} \dx,
\end{align}
where $p(\boldx_1)$ and $p(\boldx_2)$ are the marginal distributions of $p(\boldx)$, so it is guaranteed that 
$
\int p(\boldx_1)p(\boldx_2) g(\boldx;\boldtheta) \dx = 1.
$

$N(\boldtheta)$ in \eqref{eq.norm.popu} can be approximated via two-sample U-statistics \citep{hoeffdingInequality} using the dataset, 
\begin{align*}
N(\boldtheta) \approx \hat{N}(\boldtheta):= \frac{1}{{n \choose 2}}\sum_{j\neq k} {\exp[\sum_{ u\le v} \boldtheta_{u,v }^\top \boldpsi(\boldx^{[j,k]}_{u,v})]},
\end{align*}
where $\boldx^{[j,k]}$ is a \emph{permuted sample}: $\boldx^{[j, k]} = (\boldx^{(j)}_1, \boldx^{(k)}_2)$.

Notice that the normalization term $N(\boldtheta)$ in \eqref{eq.norm.popu} is an integral with respect to a probability distribution $p(\boldx_1)p(\boldx_2)$. Though we do not have samples directly from such a distribution, U-statistics help us ``simulate'' such an expectation using joint samples. 
In Maximum Likelihood Estimation, 
density models are in general hard to compute since their normalization term is not with respect to a sample distribution. In comparison, $N(\boldtheta)$ can always be easily approximated for \emph{any choice} of $\boldpsi$. This gives us the flexibility to consider complicated PR models beyond the conventional Gaussian or Ising models. 

This model can be learned via the algorithm of maximum likelihood mutual information (MLMI) \citep{SuzukiMLMI}, by simply minimizing the Kullback-leibler divergence between $p(\boldx)$ and $p_\boldtheta(\boldx)=p(\boldx_1)p(\boldx_2)g(\boldx;\boldtheta)$:
\begin{align*}
\hat{\boldtheta} = \argmin_{\boldtheta} \text{KL}[p||{p_\boldtheta}].
\end{align*}
Substitute the model of $g(\boldx;\boldtheta)$ into the above objective and approximate $N(\boldtheta)$ by $\hat{N}(\boldtheta)$, then the estimated parameter $\hat{\boldtheta}$ is obtained as
\begin{align*}
\hat{\boldtheta}=\argmin_\boldtheta \underbrace{ -\sum_{i=1}^{n}\sum_{ u\le v} \boldtheta_{u,v }^\top \boldpsi(\boldx_{u,v}^{(i)}) +\log \hat{N}(\boldtheta)}_{ \ell_{\text{MLMI}}(\boldtheta)} + C,
\end{align*}
where $C$ is some constant. From now on, we denote $\ell_{\text{MLMI}} (\boldtheta)$ as the \emph{negative} likelihood function.
Due to Theorem \ref{thm.struct.to.gibbs} and our parametrization, if the passages between two groups are rare, then $\boldtheta$ is very sparse. Therefore, we may use sparsity inducing group-lasso penalties \citep{YuanLi2006GroupLasso} to encourage the sparsity on each subvector $\boldtheta_{u,v}$:
\begin{align}
\label{eq.the.objective}
\hat{\boldtheta}  = \argmin_\boldtheta  \ell_{\text{MLMI}} (\boldtheta) + \lambda\sum_{u\le v} \|\boldtheta_{u,v}\|.
\end{align}
This objective is convex, unconstrained, and can be easily solved by standard sub-gradient methods. $\lambda$ is a regularization parameter that can be tuned via cross-validation. 

Now let us define the ``true parameter'' $\boldtheta^*$, such that $  p(\boldx)=q(\boldx)g(\boldx;\boldtheta^*).$
The learned parameter $\hat{\boldtheta}$ is an estimate of $\boldtheta^*$, where $\boldtheta^*_{u,v}$ is non-zero  on pairwise features that are at least involved in one of the passage potentials. Moreover, as Theorem \ref{thm.gibbs.to.struct} and Proposition \ref{prop.pairwise.MV} show, if $X_u \in X1$ and $X_v \in X2$ are not in any of the passage structures, i.e., $\boldtheta^*_{u,v} = \boldzero$, then $X_u \independent X_v | \backslash \{X_u, X_v\}$. 

Given the optimization problem \eqref{eq.the.objective}, it is natural to consider the structure recovery consistency, i.e., under what conditions, the sparsity pattern of $\hat{\boldtheta}$ is the same as that of $\boldtheta^*$?  

\section{High-dimensional Structure Recovery Consistency}
\label{sec.consistency}
To better state the structure recovery consistency theorem, we use new indexing system with respect to the sparsity pattern of the parameter. 
Denoting the pairwise index set as $H = \{(u,v) | u\ge v\}$, 
 two sets of \textit{subvector indices} can be defined as  $S = \{t'\in H ~|~ \|\boldtheta^*_{t'}\| \neq 0\}, S^c = \{t'' \in H ~|~ \|\boldtheta^*_{t''}\| = 0\}.$ We rewrite the objective \eqref{eq.the.objective} as
\begin{align}
\label{eq.obj.alter}
\thetahat = \argmin_{\boldtheta} \ell(\boldtheta) 
+ \lambda_{n} \sum_{t'\in S} \| \boldtheta_{t'} \| + \lambda_{n} \sum_{t''\in S^c}\|\boldtheta_{t''}\|.
\end{align}
Similarly we can define 
$\hat{S}$ and $\hat{S^c}$. From now on, we simplify $\ell_{\mathrm{MLMI}}(\boldtheta^*)$ as $\ell(\boldtheta^*)$.

Now we state our assumptions.
\begin{assum}[Dependency]
	\label{assum.depen}
	The minimum eigenvalue of the \textbf{submatrix} of the log-likelihood Hessian is lower-bounded:
	\begin{align*}
	\Lambda_\mathrm{min}(\nabla_{\boldtheta_S}\nabla_{\boldtheta_S} \ell(\boldtheta^*)) \ge \lambda_\mathrm{min} > 0,
	\end{align*}
	with probability 1, where $\Lambda_{\mathrm{min}}$ is the minimum-eigenvalue operator of a symmetric matrix
\end{assum}
\begin{assum}[Incoherence]
	\label{assum.incoherence}
	\begin{align*}
	\max_{t'' \in S^c}\left\|\left[\nabla_{\boldtheta_{t''}}\nabla_{\boldtheta_S}\ell(\boldtheta^*)\right] \left[\nabla_{\boldtheta_S}\nabla_{\boldtheta_{S}} \ell(\boldtheta^*)\right]^{-1}\right\|_1 \le 1-\alpha,
	\end{align*} 
	with probability 1, where $0<\alpha \le 1, $ and $\|Y\|_1 = \sum_{i,j} \|Y_{i,j}\|_1$.
\end{assum}
The first two assumptions are common in the literatures of support consistency. The first assumption guarantees the identifiability of the problem. The second assumption ensures the pairwise factors in passages are not too easily affected by those are not in any passages. The third assumption states the likelihood function is ``well-behaved''.
\begin{assum}[Smoothness on Likelihood Objective]
	\label{assum.smooth}
	The log-likelihood ratio $\ell(\boldtheta)$ is smooth around its optimal value, i.e., it has bounded derivatives
	\begin{align*}
	\max_{\bolddelta, \|\bolddelta\|\leq \|\boldtheta^*\|}\left\| \nabla^2 \ell(\boldtheta^*+\bolddelta)\right\|  &\leq \lambda_\mathrm{max} < \infty,\notag\\
	\max_{t\in \{S \cup S^c\}} \max_{\bolddelta, \|\bolddelta\|\leq \|\boldtheta^*\|}  \vertiii{\nabla_{\boldtheta_t}\nabla^2 \ell(\boldtheta^* + \bolddelta)}&\leq \lambda_{3,\mathrm{max}}< \infty,\notag
	\end{align*}
	with probability $1$. 
\end{assum}
$\left\|\cdot\right\|$,  $\vertiii{\cdot}$ are the spectral norms of a matrix and a tensor respectively (See e.g., \citet{tomioka2014spectral} for the definition of the spectral norm of a tensor).

\begin{assum}[Bounded PR Model]
	\label{assum.bounded.ratio}
	For any vector $\bolddelta \in \mathbb{R}^{\text{dim}(\boldtheta^*)}$ such that $\|\bolddelta\|\leq \|\boldtheta^*\|$, the following inequality holds:
	\begin{align*}
	0 < C_\mathrm{min} \le g(\boldx; \boldtheta) \le C_\mathrm{max} < \infty,
	\end{align*}
	 $\|\boldf_t\|_{\infty} \le \frac{C_{\boldf_t, \mathrm{max}}}{\sqrt{b}}$ and $\|\boldf_t\| \le C'_{\boldf_t, \mathrm{max}}, \forall t \in (S \cup S^c)$.
\end{assum}
This assumption simply indicates our PR model is bounded from above and below around the optimal value. Though it rules out the Gausssian distribution whose PR is not necessarily upper/lower-bounded, as a theory of generic pairwise models, we think it is acceptable.

\begin{them}
	\label{corol.bounded.ratio}
	Suppose that Assumptions  \ref{assum.depen}, \ref{assum.incoherence}, \ref{assum.smooth}, and  \ref{assum.bounded.ratio} are satisfied as well as $\min_{t\in S} \|\boldtheta^*_t\| \geq \frac{10}{\lambda_\text{min}} \sqrt{|S|}\lambda_{n}$.
	Suppose also that the regularization parameter is chosen so that
	\begin{align*}
	\frac{24(2-\alpha)}{\alpha} \sqrt\frac{{M\log \frac{m^2+m}{2}}}{n} &\le \lambda_{n}, 
	\end{align*}
	where $M$ is a positive constant.
	Then there exist some constants $L$, $K_1$ and $K_2$ such that if $n\geq L |S|^2\log  \frac{m^2+m}{2}$, with the probability at least $1- K_1\exp\left( - K_2 \lambda_{n}^2 n \right)$, MLMI in \eqref{eq.obj.alter} has the following properties:
	\vspace*{-3mm}
	\begin{itemize}
		\item Unique Solution: The solution of \eqref{eq.obj.alter} is unique.
		\item Successful Passage Recovery: $\hat{S} = S$ and $\hat{S^c} = {S^c}$.
		\item $\|\hat{\boldtheta} - \boldtheta^*\| = \mathcal{O}(\sqrt{\frac{\log \frac{m^2 + m}{2}}{n}})$.
	\end{itemize}
\end{them}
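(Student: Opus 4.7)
The plan is to apply the primal--dual witness (PDW) construction, the standard route to group-sparse support recovery in regularized M-estimation. The KKT conditions for \eqref{eq.obj.alter} state that $\hat{\boldtheta}$ is optimal iff there is a subgradient $\hat{\boldz}$ of $\sum_t\|\boldtheta_t\|$ with $\nabla\ell(\hat{\boldtheta}) + \lambda_n \hat{\boldz}=\boldzero$, $\hat{\boldz}_t = \hat{\boldtheta}_t/\|\hat{\boldtheta}_t\|$ when $\hat{\boldtheta}_t\neq \boldzero$, and $\|\hat{\boldz}_t\|\le 1$ otherwise. I would build the witness in two stages: first solve the restricted (oracle) program
\[
\tilde{\boldtheta}_S = \argmin_{\boldtheta_S}\,\ell(\boldtheta_S,\boldzero) + \lambda_n\sum_{t'\in S}\|\boldtheta_{t'}\|, \qquad \tilde{\boldtheta}_{S^c}=\boldzero,
\]
and read off $\hat{\boldz}_S$ from its stationarity condition; then define $\hat{\boldz}_{S^c}$ from the stationarity equation on $S^c$. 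Standard PDW arguments show that $(\tilde{\boldtheta},\hat{\boldz})$ is then the \emph{unique} minimizer of \eqref{eq.obj.alter} and that $\hat{S}=S$ provided one can establish strict dual feasibility $\max_{t''\in S^c}\|\hat{\boldz}_{t''}\|<1$; the minimum-signal assumption $\min_{t\in S}\|\boldtheta^*_t\|\ge(10/\lambda_\text{min})\sqrt{|S|}\lambda_n$ rules out vanishing signal groups and yields $\tilde{\boldtheta}_t\neq \boldzero$ for every $t\in S$.

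The deterministic reduction uses a mean-value expansion of $\nabla\ell(\tilde{\boldtheta})$ around $\boldtheta^*$. With $\hat{\bolddelta}_S = \tilde{\boldtheta}_S - \boldtheta^*_S$, the two block-rows of the stationarity equation read
\begin{align*}
\nabla_{\boldtheta_S}\ell(\boldtheta^*) + \nabla^2_{SS}\ell(\boldtheta^*)\hat{\bolddelta}_S + R_S &= -\lambda_n\hat{\boldz}_S,\\
\nabla_{\boldtheta_{S^c}}\ell(\boldtheta^*) + \nabla^2_{S^cS}\ell(\boldtheta^*)\hat{\bolddelta}_S + R_{S^c} &= -\lambda_n\hat{\boldz}_{S^c},
\end{align*}
where $R_S,R_{S^c}$ are the second-order Taylor remainders, which are quadratic in $\hat{\bolddelta}_S$ and are controlled by the tensor spectral-norm bound of Assumption \ref{assum.smooth}. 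Solving the first equation for $\hat{\bolddelta}_S$ via the invertibility provided by Assumption \ref{assum.depen}, substituting into the second, and applying the incoherence Assumption \ref{assum.incoherence} yields a bound of the schematic form
\[
\max_{t''\in S^c}\|\hat{\boldz}_{t''}\| \;\le\; (1-\alpha) + \frac{2-\alpha}{\lambda_n}\Bigl(\max_t\|\nabla_{\boldtheta_t}\ell(\boldtheta^*)\| + \max_t\|R_t\|\Bigr).
\]
The same manipulation gives $\|\hat{\bolddelta}_S\|\le (5/\lambda_\text{min})\sqrt{|S|}\lambda_n$, which both protects the minimum-signal condition and furnishes the announced $\ell_2$ rate $\mathcal{O}(\sqrt{\log((m^2+m)/2)/n})$ once $\lambda_n$ is chosen at the prescribed order.

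The stochastic step is to bound the noise term $\max_t\|\nabla_{\boldtheta_t}\ell(\boldtheta^*)\|$. Unpacking $\ell$, each coordinate is the difference between an i.i.d.\ sample average of $\boldpsi(\boldx_t)$ and the gradient of $\log\hat{N}(\boldtheta^*)$, where $\hat{N}(\boldtheta^*)$ is a two-sample U-statistic over the $\binom{n}{2}$ permuted pairs $\boldx^{[j,k]}$. I would proceed in three stages: (a) apply Hoeffding's inequality for bounded U-statistics, using $C_\text{max}$ from Assumption \ref{assum.bounded.ratio}, to show $|\hat{N}(\boldtheta^*)-1|$ is small with exponentially high probability; (b) on that event, linearize $\log\hat{N}(\boldtheta^*)$ about $1$, so that $\nabla_{\boldtheta_t}\log\hat{N}(\boldtheta^*)$ becomes a bounded centered U-statistic plus a negligible correction, the kernel bounds coming again from Assumption \ref{assum.bounded.ratio}; (c) apply Hoeffding to each coordinate and union-bound over the $(m^2+m)/2$ group indices to obtain $\max_t\|\nabla_{\boldtheta_t}\ell(\boldtheta^*)\|\le c\sqrt{M\log((m^2+m)/2)/n}$ on an event of probability at least $1-K_1\exp(-K_2\lambda_n^2 n)$. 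With the prescribed lower bound on $\lambda_n$ this noise is at most $\alpha\lambda_n/[6(2-\alpha)]$, and under $n\ge L|S|^2\log((m^2+m)/2)$ the remainder $\max_t\|R_t\|$ is of strictly smaller order, so the schematic bound above stays below $1-\alpha/2$.

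The main obstacle is precisely this U-statistic concentration step. Classical MN support-recovery proofs reduce the gradient of the likelihood to an i.i.d.\ sum to which Hoeffding or McDiarmid applies directly; here the normalization $\hat{N}(\boldtheta^*)$ is a sum over $\binom{n}{2}$ dependent permuted pairs and enters the loss through a logarithm, so one cannot cite those inequalities verbatim. Handling it cleanly requires the two-stage linearization above combined with a Hoeffding-type bound for bounded-kernel U-statistics and a careful pairwise-group union bound. Once these pieces are in place the three conclusions follow: uniqueness from strict dual feasibility plus positive definiteness of $\nabla^2_{SS}\ell(\boldtheta^*)$ (Assumption \ref{assum.depen}); exact recovery $\hat{S}=S$, $\hat{S^c}=S^c$ from $\tilde{\boldtheta}_{S^c}=\boldzero$ together with the non-vanishing of each $\tilde{\boldtheta}_t$ on $S$ under the minimum-signal bound; and the $\ell_2$ rate directly from the bound on $\|\hat{\bolddelta}_S\|$ derived in the second paragraph.
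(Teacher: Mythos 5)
Your proposal follows essentially the same route as the paper: a primal--dual witness argument whose deterministic part is the standard oracle-program/mean-value-expansion reduction under Assumptions \ref{assum.depen}--\ref{assum.smooth} (which the paper delegates to the cited KLIEP change-detection analysis), with the genuinely new ingredient being the concentration of $\max_t\|\nabla_{\boldtheta_t}\ell(\boldtheta^*)\|$, handled via boundedness of the PR model, Hoeffding-type bounds for the two-sample U-statistic normalizer, and a union bound over the $(m^2+m)/2$ groups --- exactly the content of the paper's Lemma in Appendix D. The only cosmetic difference is that you linearize $\log\hat{N}(\boldtheta^*)$ around $1$, whereas the paper writes the gradient exactly as a $\hat{g}$-weighted U-statistic average and splits it into three deviation terms; the two devices are interchangeable here.
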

\vspace*{-3mm}
The proof of Theorem $\ref{corol.bounded.ratio}$ is detailed in Appendix \ref{proof.main.them}. Since the PR function is a \emph{density ratio function} between $p(\boldx)$ and $p(\boldx_1)p(\boldx_2)$, and \eqref{eq.obj.alter} is also a sparsity inducing Kullback-Leibler Importance Estimation Procedure (KLIEP) \citep{Covariate_Shift_jour}, the previously developed support consistency theorem  \cite{Liu2015Change,arxiv_onMNChange} can be applied here as long as we can verify a few assumptions and lemmas. 

The sample size required for the proposed method increases with $\log m$ (since $\log \frac{(m^2+m)}{2} \le 2\log m$ if $m>2$ ) and the estimation error on $\boldtheta$ vanishes at the speed of $\sqrt{\frac{\log m}{n}}$. They are the same as the optimal rates obtained in previous researches for Gaussian graphical model structure learning \citep{Ravikumar_2010,RaskuL1GMRF}.

This theorem also indicates that the sample size required is not influenced by the structural density of the entire MN structure, but by the number of pairwise factors in the passage potentials. This is encouraging since we are allowed to explore  PMNs with dense groups which would be hard to learn using conventional methods.

\section{Experiments}
\label{sec.exp}
Unless specified otherwise, we use pairwise feature function $\psi(x_u, x_v) = x_ux_v$. Note this does \emph{not} mean we assume the Gaussianity over the joint distribution, since this is a parameterization of a PR rather than a joint distribution.
\subsection{Synthetic Datasets}
\label{sec.syn}
\begin{figure*}
	\centering
	\subfigure[$\boldTheta, \rho = .8, |X1| = 40$]{	\includegraphics[width=.27\textwidth]{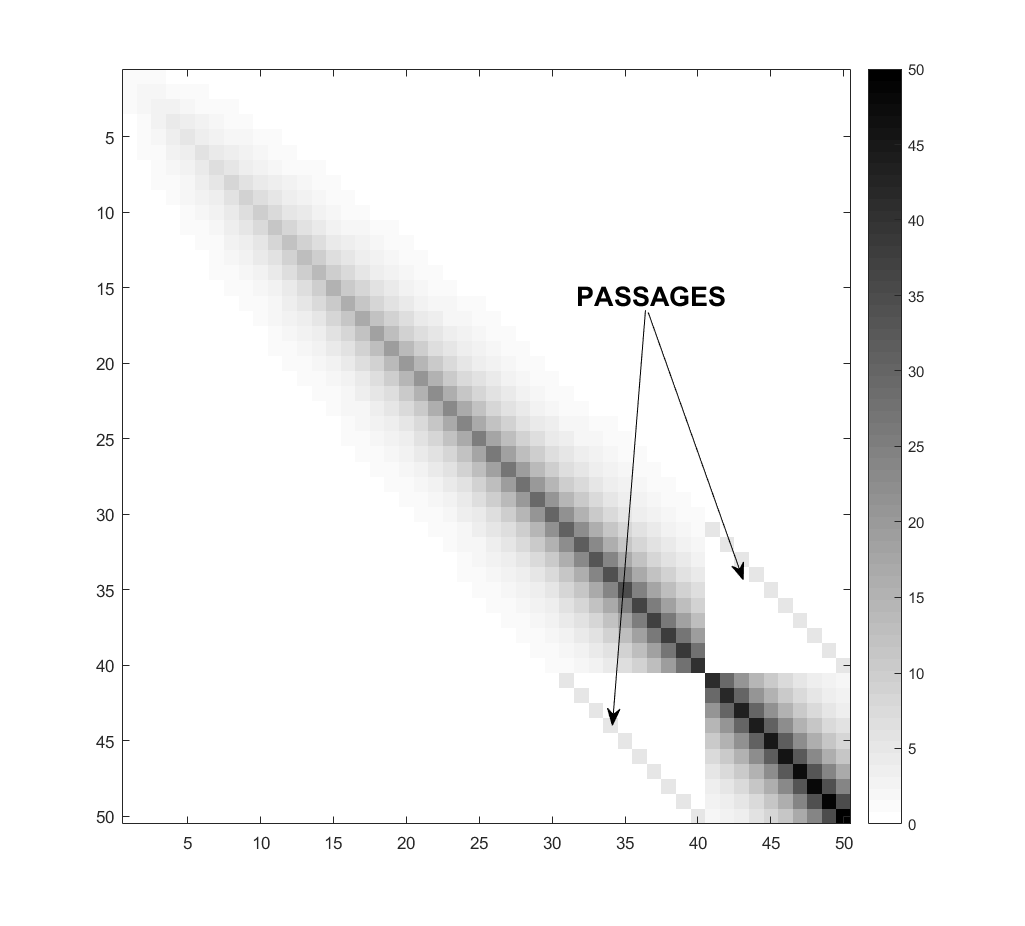}
		\label{fig.precision}
	}
	\subfigure[]{\includegraphics[width=.10\textwidth]{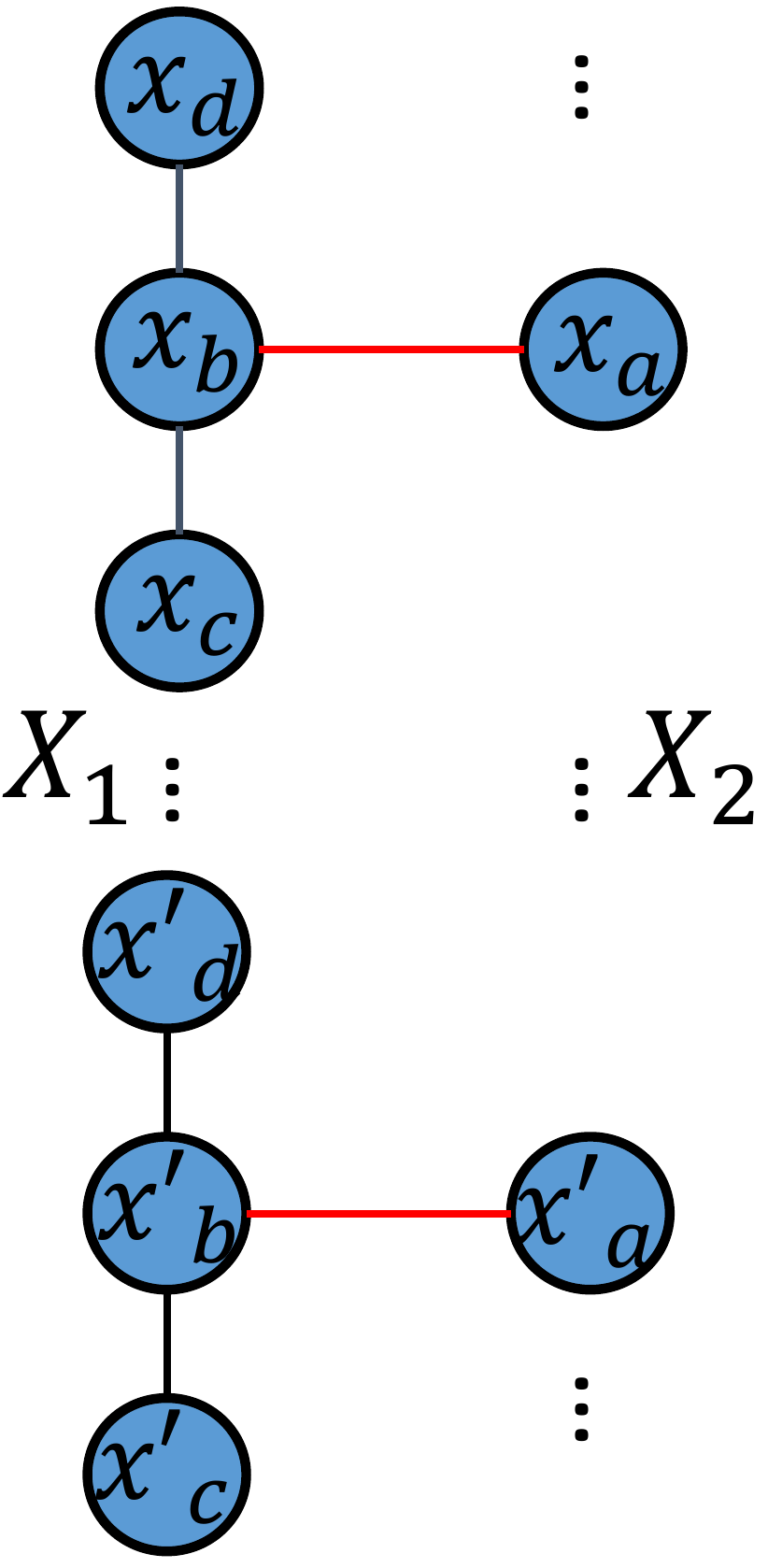}
		\label{fig.T}
	}
	\subfigure[ROC of Gaussian Dataset]{\includegraphics[width=.27\textwidth]{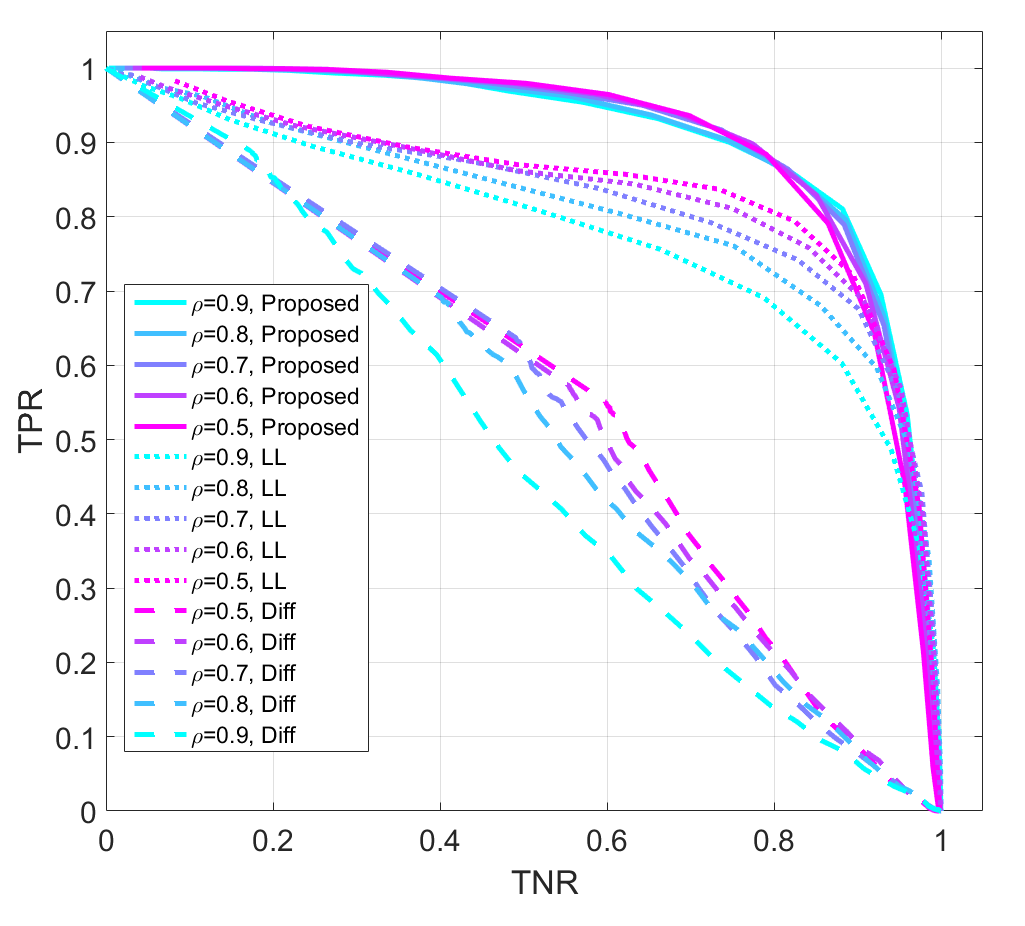}
	\label{fig.ROC}
	}
	\subfigure[ROC of ``Diamond'' Dataset]{\includegraphics[width=.27\textwidth]{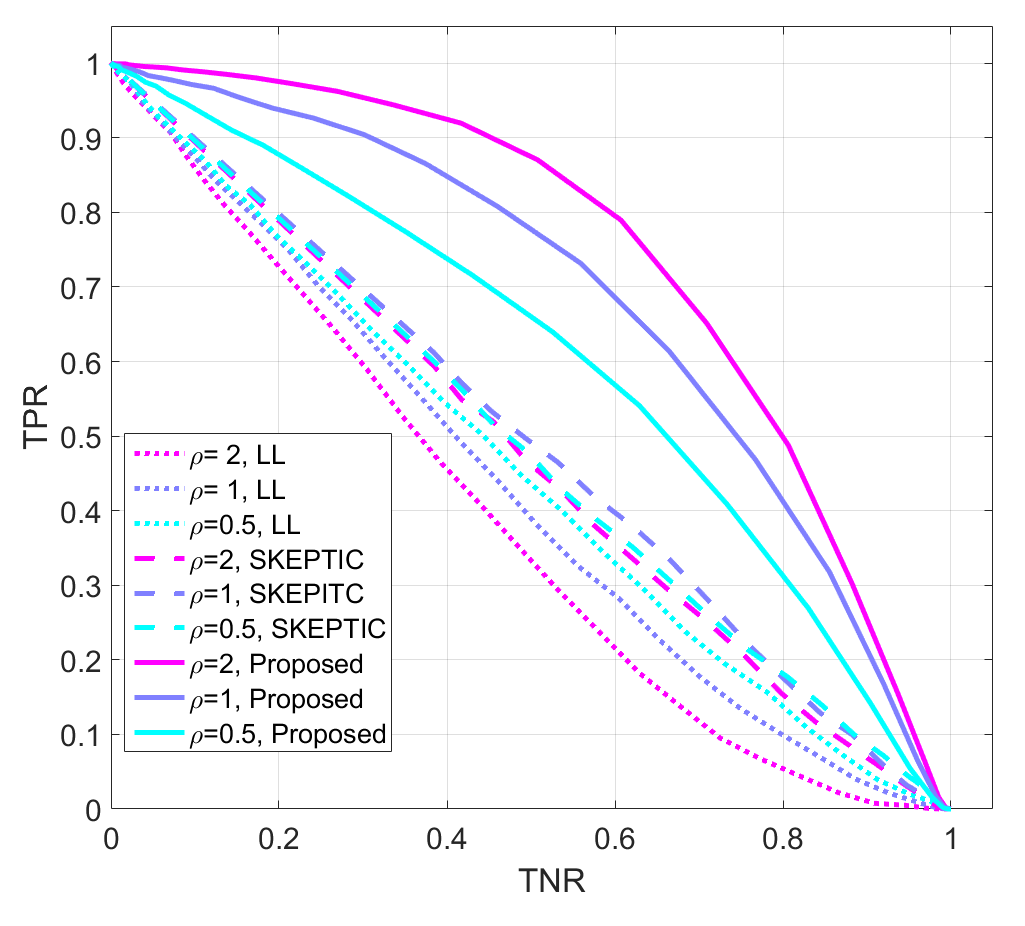}
		\label{fig.ROC2}
	}
	\caption{Synthetic experiments}
	\vspace*{-3mm}
\end{figure*}
We are interested in comparing the proposed method with a few possible alternatives:
\textbf{LL} \citep{MeinshausenBuhlmann,Ravikumar_2010}, \textbf{SKEPTIC} \citep{liu2012high} and 	\textbf{Diff} \citep{zhao2014direct}: 
A \emph{direct} difference estimation method that learns the differences between two MNs without learning each individual precision matrix separately. In this paper, we employed this method to learn the differences between two Gaussian densities: $p(\boldx)$ and $p(\boldx_1)p(\boldx_2)$. 

We first generate a set of joint samples $\{\boldx^{(i)}\}_{i=1}^{50} \sim \mathcal{N}(\boldzero, \boldTheta^{-1})$, where $\boldTheta \in \mathbb{R}^{50 \times 50}$ and is constructed in two steps. First, create
\begin{align*}
\Theta_{i,j} = 
\begin{cases}
\rho^{|i-j|}\sqrt{ij}, & i,j<40 ~\mathrm{ or }~ i,j>40, \\
0, & \mathrm{Otherwise},
\end{cases}
\end{align*}
where $0<\rho<1$ is a coefficient controlling the dominance of the diagonal entries. Second, let $\Lambda$ be the $15^{\mathrm{th}}$ smallest eigenvalue of $\boldTheta$, and fill the submatrices $\boldTheta_{\{41,\dots, 50\}, \{31,\dots, 40\}}$ and $\boldTheta_{\{31,\dots, 40\}, \{41,\dots, 50\}}$ with $ \Lambda\boldI_{10}$, where $\boldI_{10}$ is a $10 \times 10$ identity matrix. By such a construction, we have created two groups over $X$: $X = (X_{\{1,\dots,40\}}, X_{\{41,\dots,50\}})$ and 10 passages between them. Notably, within two groups, the precision matrix is \emph{dense}, and random variables interact with each other via powerful links when $\rho$ is large. An example of $\boldTheta$ when $\rho=0.8$ is plotted in Figure \ref{fig.precision}. 
We measure the performance of three methods using the True Postive Rate (TPR) and True Negative Rate (TNR). The detailed definition of TPR and TNR is deferred to Appendix, \ref{exp.setting}.

The ROC curve in Figure \ref{fig.ROC} can be plotted by adjusting the sensitivity of each method: Tuning the regularization parameter of the proposed method and LL, or the threshold parameter of Diff.

As we can see, the proposed method has the best overall performance on all $\rho$ choices, comparing to both LL and Diff. Also, as the links within each group get more and more powerful (by increasing $\rho$), the performance of LL and Diff decay significantly, while the proposed method almost remain unchanged.

As the proposed method is capable of handling complex models, we draw 50 samples from a 52-dimensional ``diamond'' distribution used in \citep{liu2014ChangeDetection} where the correlation among random variables are non-linear. 
To speed-up the sampling procedure, the graphical model of this distribution is constructed by concatenating $13$ simple $4$-variable MNs whose density functions are defined as 
\[
p(x_a, x_b, x_c, x_d) \propto \exp\left(-\rho x_a^2x_b^2 - .5x_b x_c - .5x_b x_d \right)\cdot \mathcal{N},
\]
where $\mathcal{N}$ is short for a normal density $\mathcal{N}(\boldzero, .5\boldI_4)$ over $x_a, x_b, x_c$ and $ x_d$. Notice this distribution does not have a closed form normalization term. 
The graphical model of such a distribution is illustrated in Figure \ref{fig.T}. In this experiment, the coefficient $\rho$ is used to control the strength of inter-group interactions ($x_a \leftrightarrow x_b$), and we set $\psi(x_u, x_v) = x_u^2 x_v^2$. Other than LL, we include SKEPTIC due to the non-Gaussian nature of this dataset.
The performance is compared in Figure \ref{fig.ROC2} using ROC curves. 

The correlation among random variables are completely non-linear. As the power of interactions on passages increases, LL performs worse and worse since it still relies on the Gaussian model assumption. Thanks to the correct PR model, the proposed method performs reasonably well and gets better when $\rho$ increases. As the density model does not fit into the Gaussian copula model, SKEPTIC also performs poorly. 

\subsection{Bipartisanship in $109^{\mathrm{th}}$ US Senate}
\begin{figure}
	\centering
	\includegraphics[width=.7\textwidth]{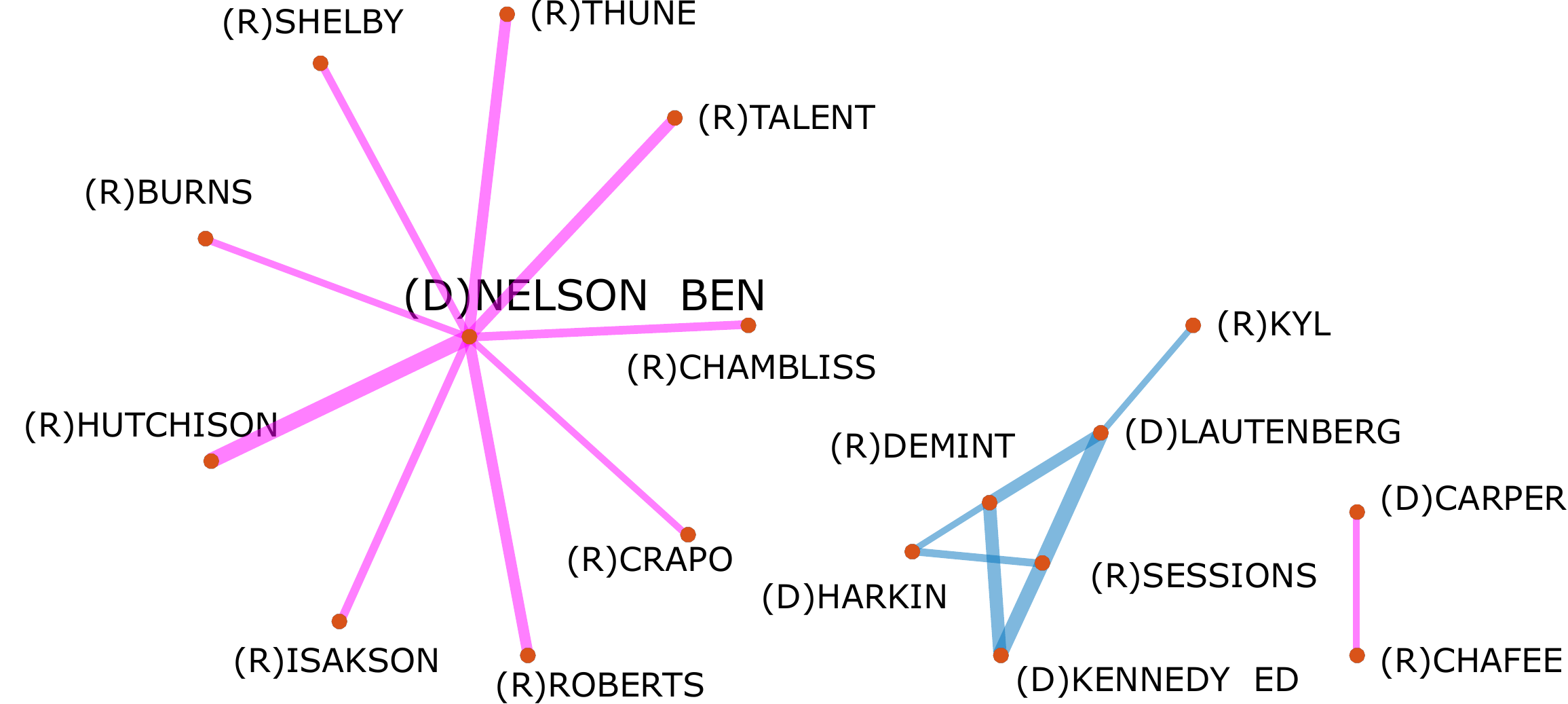}
	\caption{Bipartisanship in $109^{\mathrm{th}}$ US Senate. Prefix ``(D)'' or ``(R)'' indicates the party membership of a senator. Red: positive influence, Blue: negative influence. Edge widths are proportional to  $|\theta_{u,v}|$.}
	\label{fig.votes}
	\vspace*{-5mm}
\end{figure}
We use the proposed method to study the bipartisanship between Democrats and Republicans in the $109^{\mathrm{th}}$ US Senate via the recorded votes. There were totally 100 senators (45 Democrats and 55 Republicans) casting votes on 645 questions with ``yea'', ``nay'' or ``not voting''. The task is to discover the cross-party links between senators. We construct a dataset $\{\boldx^{(i)}\}_{i=1}^{645} \sim X$ using all 645 questions as observations, where each observation $\boldx \in \{1, -1, 0\}^{100}$ corresponds to the votes on a single question by 100 senators, and random variables $X = \left(X_{\{1,\dots, 45\}}, X_{\{46,\dots ,100\}}\right)$ are senators partitioned according to party memberships.

We run the proposed method directly on this dataset, and decrease $\lambda$ from 10 until $|\hat{S}| > 15$. To avoid complication, we only plot edges that contain nodes from different groups in Figure \ref{fig.votes}.

It can be seen that Ben Nelson, a conservative Democrat, who ``frequently voting against his party'' \citep{wiki:Ben_Nelson}, has multiple links with the other side. On the right, Democrat Tom Carper tends to agree with Republican Lincoln Chafee. Carper collaborated with Chafee on multiple bipartisan proposals \citep{carper1,carper2} while Chafee, who ``support for fiscal and social policies that often opposed those promoted by the Republican Party'' \cite{wiki:Lincoln-Chafee} finally switched his affiliation to Democratic in 2013. Interestingly, we have also observed a cluster of senators who tend to disagree with each other.

\subsection{Pairwise Sequences Alignment}
\begin{figure}
	\centering
	\subfigure[Twitter keyword frequency time-series alignments. $n = 50, m = 962 $ and $\mathcal{X} = \mathbb{R}$. ]{\includegraphics[width=.7\textwidth]{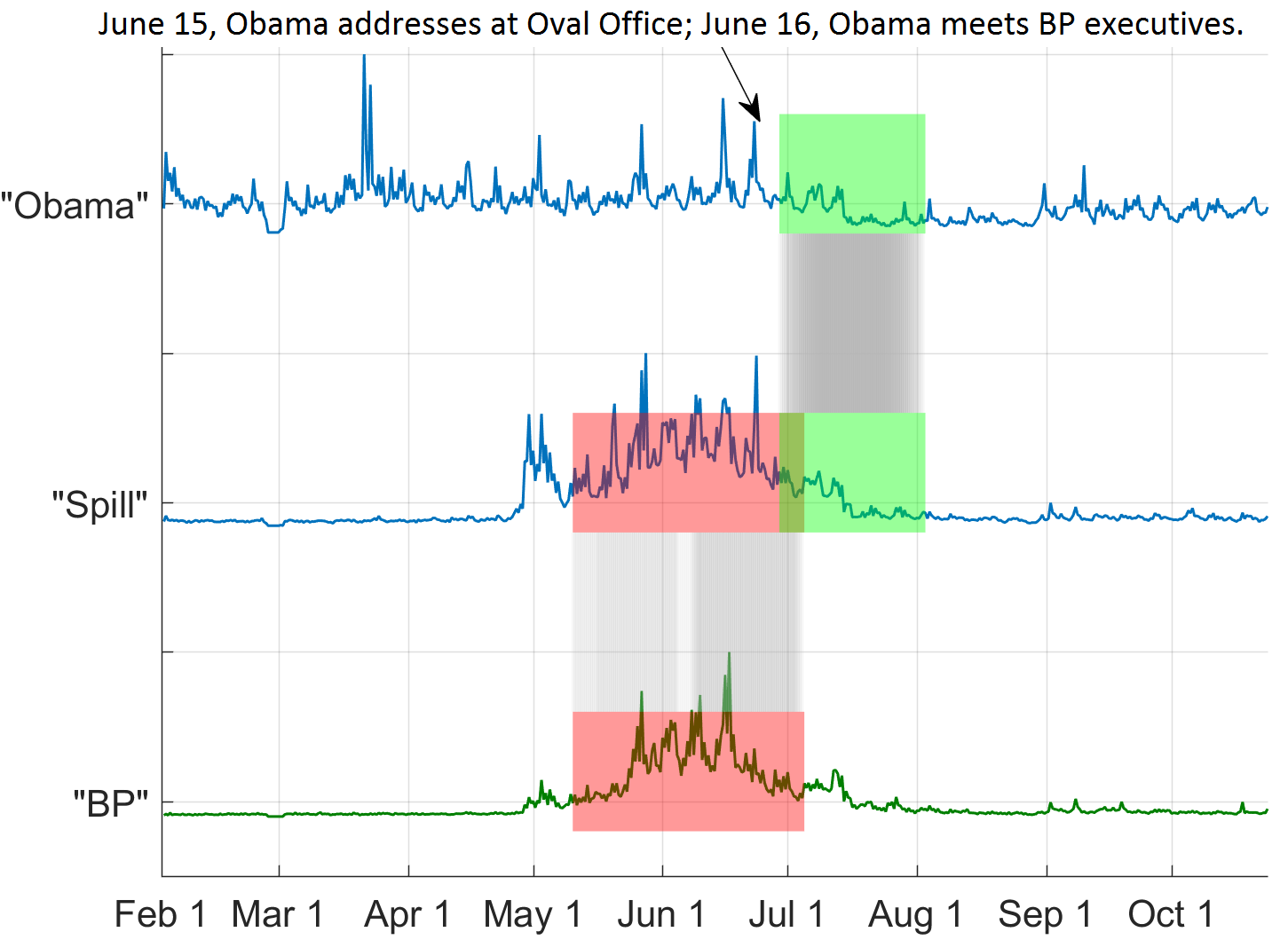}
		\label{fig.bp}}
	\subfigure[Amino acid sequence alignments between AAD01939 (human) and AAQ67266 (fly). $n = 10, m = 592,  \phi(x_i, x_j) = \delta(x_i, x_j)$ and $\mathcal{X} = \{\text{amino acid dictionary}\}$.]{\includegraphics[width=.7\textwidth]{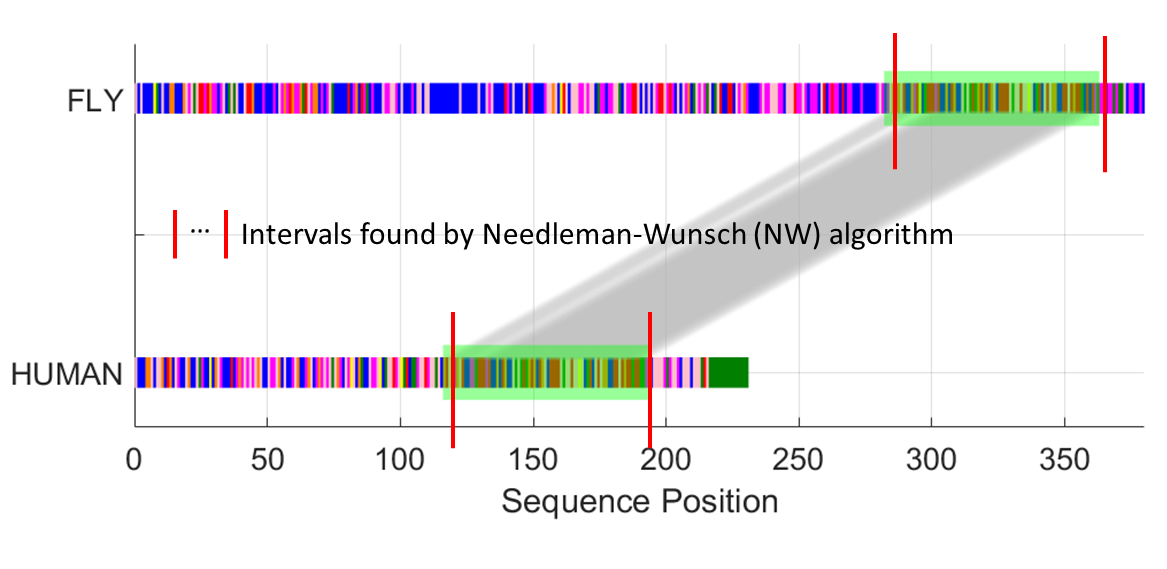}
	\label{fig.acid}}
	\caption{Sequence alignment. For two aligned windows with size $n$, we plot $n$ gray lines between two windows linking each pair of elements. Since lines are so close to each other, they look like ``gray shades'' on the plot. The color box contains the region of consecutively aligned windows.}
	\vspace*{-7mm}
\end{figure}
PMN can also be used to ``align'' sequences. Given \emph{a pair} of sequences where points are collected from the domain $\mathcal{X}$, we pick sequence 1 and construct the dataset by sliding a window sized $n$ toward future, until reaching the end. Suppose there are $m_1$ windows generated, then we can create a dataset $\{\boldx_1^{(i)}\}_{i=1}^n$, $\boldx \in \mathcal{X}^{m_1}$. Similarly, we construct another  dataset $\{\boldx_2^{(i)}\}_{i=1}^n$, $\boldx \in \mathcal{X}^{m_2}$ on sequence 2, and make joint samples by letting $\boldx^{(i)} = \left(\boldx_1^{(i)}, \boldx_2^{(i)}\right)$. 
After learning a PMN over two groups, 
if $X_u$ and $X_v$ are connected, then we regard the elements in the $u$-th window and the elements in the $v$-th window are ``aligned''.
See Figure \ref{fig.illus.seq.matching} in Appendix for an illustration. 

We run the proposed method to learn PMNs over two datasets: Twitter keyword count sequences \cite{liu2013changepoint} and Amino acid sequences with Genebank ID: AAD01939 and AAQ67266. The results were obtained by decreasing $\lambda$ from 10 so $|\hat{S}| > 15$. 

For the Twitter dataset, we collect normalized frequencies of keywords as time-series over 8 months, during the event "Deepwater Horizon oil spill"
 in 2010. We learn alignments between two pairs of keywords: ``Obama'' vs. ``Spill'' and ``Spill'' vs. ``BP''. The results are plotted in Figure \ref{fig.bp} where we can see the sequences of two pairs are aligned well  in chronological order. The two popular keywords, ``BP'' and ``Spill'' are synchronized throughout almost the entire event while ``Spill'' and ``Obama'' are only synchronized later on after he delivered his speech in Oval Office on this crisis on June 15th, 2010. 

The next experiment uses two amino acid string sequences, consisting codes such as `V', `I', `L' and `F', etc. Figure \ref{fig.acid} shows that the proposed method has successfully identified the aligned segment between eyeless gene of Drosophila melanogaster (a fruitfly) and human aniridia genes. The same segment is also spotted by widely used Needleman-Wunsch (NW) algorithm \citep{needleman1970general} with statistical significance. 

\section*{Acknowledgments}
SL acknowledges JSPS Grant-in-Aid for Scientific Research Activity Start-up 15H06823.
TS acknowledges JST-PRESTO.
MS acknowledges the JST CREST program. 
KF acknowledges JSPS Grant-in-Aid for Scientific Research on
Innovative Areas 25120012. 
Authors would like to thank four anonymous reviewers for their helpful comments. 
\bibliography{main}
\bibliographystyle{icml2016}

\clearpage
\appendix
\onecolumn

\section{Proof of Proposition \ref{prop.pairwise.MV}}
\label{proof.prop.pairwise}
\begin{proof}
	For $X_u \in X1$,
	\begin{align*}
	P(X_u | \backslash X_u) &= \frac{P(X_u, X_{\backslash N(u)} \cap X2| X1 \cup X_{N(u)}\backslash X_u)} {P(X_{\backslash N(u)} \cap X2| X1 \cup X_{N(u)}\backslash X_u)}.
	\end{align*}
	Since $P(X_u | \backslash X_u) = P(X_u | X1 \cup X_{N(u)}\backslash X_u)$ using the Markovian property of PMN, substituting it to the above equation, we have 
	$
	X_u \independent X_{\backslash N(u)} \cap X2| X1 \cup X_{N(u)}\backslash X_u
	$.
	
	$X_v \not \in X_{N(u)}$ means $X_v \in X_{\backslash N(u)} \cap X2$. Using the weak union rule for conditional independence (see e.g., \citep{PGM_Koller}, 2.1.4.3), we obtain $X_u \independent X_v |  \backslash \{X_u, X_v\}$. 
	
	For $X_u \in X2$, the proof is the same. 
\end{proof}

\section{Proof of Theorem \ref{thm.gibbs.to.struct}}
\label{sec.proof.gibbs.to.struct}
\begin{proof}
We define that 
$\mathbf{B}(i)$ is the set of passages contains $X_i$.
Here we only show the proof that Eq. \eqref{eq.markov.1} holds for GPR. Let's denote $\phi_B$ as short for $\phi_B(X_B)$.
\begin{align*}
&P(X_i | X1 \cup X_{N(i)} \backslash X_i)
\\=& \frac{\frac{1}{Z}\int_{X_{\backslash N(i)} \cap X2 } P(X1) P(X_{2}) \prod_{B \in \mathbf{B}(G)}\phi_{B} }{\frac{1}{Z}\int_{X_i}\int_{X_{\backslash N(i)}\cap X2} P(X1) P(X_{2}) \prod_{B \in \mathbf{B}(G)}\phi_{B} }
\\=& \left(\frac{ P(X1)\prod_{B\in \mathbf{B}(i)}\phi_{B} }{\int_{X_i} P(X1)   \prod_{B\in \mathbf{B}(i)}\phi_{B}}\right) \cdot
\left(\frac{\int_{X_{\backslash N(i)}\cap X2}  P(X2) \prod_{B\in \backslash \mathbf{B}(i)}\phi_{B}}{\int_{X_{\backslash N(i)}\cap X2}  P(X2)\prod_{B\in \backslash \mathbf{B}(i)}\phi_{B}}\right)
\\=& \frac{
	P(X1)\prod_{B\in \mathbf{B}(i)}\phi_{B} }{\int_{X_i}   P(X1)
	\prod_{B\in \mathbf{B}(i)}\phi_{B} }
	\\=& \frac{P(X1)
		\prod_{B\in \mathbf{B}(i)} \phi_{B} }{\int_{X_i}   P(X1)
		\prod_{B\in \mathbf{B}(i)} \phi_{B} }\cdot \frac{\frac{1}{Z} P(X2)\prod_{B\in \backslash \mathbf{B}(i)} \phi_{B}}{ \frac{1}{Z} P(X2)\prod_{B\in \backslash \mathbf{B}(i)} \phi_{B}}
		\\=& P(X_i | \backslash  X_{i}),
		\end{align*}
		from which, we obtain the desired equality. Note that we used the fact that $X_{\mathbf{B}(i)} \cap (X_{\backslash N(i)} \cap X2) = \emptyset$ from the second to the third and fourth line.
		\end{proof}
\section{Proof of Theorem \ref{thm.struct.to.gibbs}}
\label{sec.proof.struct.to.gibbs}
	\begin{proof}
		This proof is constructive. Let's clarify some notations used in this proof. 
		Lower-case bold letter $\boldsymbol{a}$ is a vector-realization of a set of random variables $A$. $P(\boldsymbol{a}_K, \boldc)$ means the probability of a realization where elements appearing on positions indexed by subgraph $K$ are allowed to take random values, while other elements are fixed to value $c \in \mathrm{dom}(X)$. Note $K$ might be $\emptyset$. We denote $P1(X)$ as the equivalency of marginal  $P(X1)$. 
		
		First we define the following potential function:
		\begin{align*}
		\phi_S(X_S=\boldx_S) = \prod_{Z \subseteq S} \Delta_{Z}(X_Z = \boldx_Z)^{(-1)^{|S|-|Z|}},
		\end{align*}
		where $S$ is a subset of $G$,  and
		\begin{align}
		\label{eq.delta}
		\Delta_{Z}(\boldx_Z) = 
		&\begin{cases}
		\frac{P(\boldx_Z, \boldc)}{P1(\boldx_{Z}, \boldc)P2(\boldx_{Z}, \boldc) }
		, &\exists{B \in \mathbf{B}(G), B\subseteq Z},  \\
		1&\text{otherwise},
		\end{cases}
		\end{align}
		
		First we show by construction, the multiplication of all potential functions over all subgraph structures,
		i.e., $\prod_{S\subseteq G} \phi_S$ will actually give us the \textbf{PR}.
		
		Due to the \emph{inclusion-exclusion} principle  (see, e.g.\citet{PGM_Koller}, 4.4.2.1), it can be shown that 
		\begin{align*}
		\prod_{S\subseteq G} \phi_S(X_S=\boldx_S) = \Delta_G(\boldx).
		\end{align*}
		If the graph $G$ contains any passage, then by definition $\Delta_G(\boldx) = \frac{P(\boldx)}{P1(\boldx) P2(\boldx)}$, which is exactly the PR.
		However, if $G$ does not include any passage, meaning $X_{1}$ is completely independent of $X_{2}$, then $\Delta_G(\boldx) = 1$ by definition, which is the exact value that a PR would take in such case.  
		
		Second, we show this construction under PMN condition is actually a \textbf{GPR}. Specifically, we show if $S$ is not a passage, then $\phi_S(X_S=\boldx_S) = 1$, i.e. its potential function is nullified. 
		
		Obviously, for a ``one-sided $S$'', $X_S \cap X1 = \emptyset$ or $X_S \cap X2 = \emptyset$, by definition, $\phi_S = 1$. 
		
		Otherwise, if $S$ are ``two-sided'' but itself is not a passage, we should be able to find two nodes, indexed by $X_u \in X1 \cap X_S$ and $X_v \in X2  \cap X_S $, that are not connected by an edge. We may write the potential function for a subgraph $S$ as
		\begin{align*}
		&\phi_S(X_S=\boldx_S) = \prod_{W\subseteq S\backslash \{u,v\}}  \left(\frac{\Delta_w(\boldx_W)\Delta_{W\cup \{u,v\}}(\boldx_{W\cup \{u,v\}})}{\Delta_{W\cup\{u\}}(\boldx_{W \cup \{u\}})\Delta_{W\cup \{v\}}(\boldx_{W\cup \{v\}})}\right)^{*},
		\end{align*}
		where $*$ means we do not care the exact power which can be either -1 or 1, and 
		\begin{align}
		\frac{\Delta_W(\boldx_W)\Delta_{W\cup \{u,v\}}(\boldx_W)}{\Delta_{W\cup\{u\}}(\boldx_{W \cup \{u\}})\Delta_{W\cup \{v\}}(\boldx_{W\cup \{v\}})} = 
		\frac{P_W  P_{W\cup \{u,v\}}}{P_{W\cup \{u\}}P_{W\cup \{v\}}}\cdot
		\frac{P2_{W\cup \{v\}}P2_{W}  P1_{W\cup \{u\}} P1_{W} }{P1_{W} P2_{W} P1_{W\cup \{u\}} P2_{W\cup \{v\}}}	\label{eq.passage},
		\end{align}
		where we have simplified the notation $P(\boldx_A,\boldc)$ as $P_A$.
		The second factor in RHS, \eqref{eq.passage} is apparently 1. For the first factor in RHS, \eqref{eq.passage}, we may divide both the numerator and denominator by $P_W \cdot P_W$. Then it yields 
		$
		\frac{P(x_u, x_v
			|\boldx_W,\boldc)}{P(x_u | \boldx_W,\boldc)P(x_v | \boldx_W,\boldc)}
		$
		which equals to one if and only if $X_u \independent X_v | \backslash \{X_u, X_v\}$. This is guaranteed by PMN condition and Proposition \ref{prop.pairwise.MV}.  
	\end{proof}

\section{Proof of Theorem \ref{corol.bounded.ratio}} 
\label{proof.main.them}
Since the PR is a density ratio between the joint density $p(\boldx_1, \boldx_2)$ and the product of two marginals $p(\boldx_1)p(\boldx_2)$, and the objective \eqref{eq.obj.alter} is derived from the same sparsity inducing KLIEP criteria as it was discussed in \citet{Liu2015Change,arxiv_onMNChange}.
The proof of Theorem \ref{corol.bounded.ratio} follows the primal-dual witness procedure \citep{WainwrightL1Sharp}.

First, the Assumptions \ref{assum.depen}, \ref{assum.incoherence} and \ref{assum.smooth} we have made in Section \ref{sec.consistency} is essentially the same as those were imposed in Section 3.2 in \citet{arxiv_onMNChange} (The Hessian of the negative log-likelihood is the sample Fisher information matrix). Then the proof follows the steps established in Section 4, \citet{arxiv_onMNChange}. However, the only thing we need to verify is that $\max_t \|\nabla_{\boldtheta_t}\ell(\boldtheta^*)\|$ is upper-bounded with high probability as $n \rightarrow \infty$. We formally state this in the following lemma:
\begin{lem}
	\label{lemma5.maintext}
	If $\lambda_n \ge \frac{24(2-\alpha)}{\alpha}\cdot\sqrt{\frac{c\log (m^2+m)/2}{n}}$, then 
	\[	P\left(\max_t \|\nabla_{\boldtheta_t}\ell(\boldtheta^*)\| \ge \frac{\alpha \lambda_n}{4(2-\alpha)} \right)\le  3\exp\left(-c''n \right),\]
	where $c$ and $c''$ are some constants.
\end{lem}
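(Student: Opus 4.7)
The plan is to bound $\nabla_{\boldtheta_t}\ell(\boldtheta^*)$ blockwise and then union-bound over the $\frac{m^2+m}{2}$ pair indices, absorbing the resulting $\log$ factor into the prescribed lower bound on $\lambda_n$. First, I would write out the gradient explicitly from the definitions of $\ell_{\mathrm{MLMI}}$ and $\hat N(\boldtheta)$:
$$
\nabla_{\boldtheta_t}\ell(\boldtheta^*) \;=\; -\frac{1}{n}\sum_{i=1}^n\boldpsi(\boldx_t^{(i)}) \;+\; \frac{1}{\hat N(\boldtheta^*)}\cdot\frac{1}{\binom{n}{2}}\sum_{j\ne k}\boldpsi(\boldx_t^{[j,k]})\,\tilde g(\boldx^{[j,k]};\boldtheta^*),
$$
where $\tilde g$ denotes the unnormalized model (so that $g = \tilde g/N$). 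Because $\boldtheta^*$ is defined so that $p(\boldx)=p(\boldx_1)p(\boldx_2)g(\boldx;\boldtheta^*)$ and $N(\boldtheta^*)=1$, the population expectations of the two terms cancel exactly, so $\nabla_{\boldtheta_t}\ell(\boldtheta^*)$ is a sum of two mean-zero fluctuations.

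Next, I would decouple the normalizer $\hat N(\boldtheta^*)$, which is itself a two-sample U-statistic of a bounded kernel (bounded thanks to Assumption \ref{assum.bounded.ratio}). Hoeffding's inequality for U-statistics \citep{hoeffdingInequality} yields $P(|\hat N(\boldtheta^*)-1|\ge 1/2)\le 2\exp(-c_1 n)$ for a constant $c_1$ depending only on $C_{\min}$ and $C_{\max}$. Conditioning on the complementary event gives $\hat N(\boldtheta^*)^{-1}\in[2/3,2]$, after which the second term of the gradient reduces to a linear U-statistic of a bounded kernel times a bounded multiplicative factor, which is the key simplification that makes the rest of the argument standard.

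Third, I would treat each scalar coordinate of the two fluctuation terms separately. Assumption \ref{assum.bounded.ratio} bounds every summand by $C_{\boldf_t,\max}/\sqrt{b}$ in magnitude, so Hoeffding's inequality (scalar form for the i.i.d.\ sample mean, U-statistic form for the second piece) gives coordinate-wise tails of the form $2\exp(-c_2 n\tau^2)$ at deviation level $\tau/\sqrt b$. A union bound over the $b$ coordinates of $\boldtheta_t$ followed by $\|v\|\le\sqrt{b}\|v\|_\infty$, and then a union bound over the $\frac{m^2+m}{2}$ choices of $t$, yields
$$
P\!\left(\max_{t}\|\nabla_{\boldtheta_t}\ell(\boldtheta^*)\|\ge \tau\right)\;\le\; \frac{(m^2+m)\,b}{2}\cdot 2\exp(-c_2\, n\tau^2)+2\exp(-c_1 n).
$$
Setting $\tau=\frac{\alpha\lambda_n}{4(2-\alpha)}$ and invoking the lower bound $\lambda_n\ge \frac{24(2-\alpha)}{\alpha}\sqrt{c\log((m^2+m)/2)/n}$ with $c$ chosen so that the $\log\frac{(m^2+m)b}{2}$ prefactor is dominated by the exponent, produces the stated $3\exp(-c''n)$ tail. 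The factor of $3$ matches the three deviation events combined: the i.i.d.\ sample mean, the U-statistic numerator, and the U-statistic denominator.

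The main obstacle is the nonlinear coupling introduced by $\hat N(\boldtheta^*)^{-1}$ in the second term, which prevents a direct single-shot application of a Hoeffding-type bound to the gradient. The cleanest workaround is exactly the conditioning argument above: restricting attention to the event $\{|\hat N(\boldtheta^*)-1|\le 1/2\}$ linearizes the second term into a bounded U-statistic, after which the classical Hoeffding machinery delivers the desired exponential tail; the deviation of $\hat N$ itself contributes only an additive $2\exp(-c_1 n)$, which is absorbed into the final $3\exp(-c''n)$.
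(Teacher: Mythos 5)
Your overall strategy is the same as the paper's: write $\nabla_{\boldtheta_t}\ell(\boldtheta^*)$ as an i.i.d.\ sample average minus a self-normalized two-sample U-statistic, observe that the population terms cancel because $p=p_1p_2\,g(\cdot;\boldtheta^*)$, concentrate each fluctuation with Hoeffding's inequality (vector form for the sample mean, U-statistic form for the permuted-sample averages), and finish with union bounds over the $b$ coordinates and the $(m^2+m)/2$ blocks before tuning $\lambda_n$. The one place your argument has a genuine gap is the treatment of the random normalizer. Conditioning on $\{|\hat N(\boldtheta^*)-1|\le 1/2\}$ only makes $\hat N(\boldtheta^*)^{-1}$ \emph{bounded}, and a bounded random prefactor times a concentrating U-statistic does not concentrate around the correct target unless the prefactor itself concentrates around $1$. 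Concretely, writing the second gradient term as $\hat N^{-1}U_n$ with $U_n=\binom{n}{2}^{-1}\sum_{j\neq k}\tilde g(\boldx^{[j,k]};\boldtheta^*)\boldf_t(\boldx^{[j,k]})$, the error splits as $\hat N^{-1}(U_n-\mathbb{E}U_n)+(\hat N^{-1}-1)\mathbb{E}U_n$; the second piece is only $O(|\hat N-1|)$, i.e.\ $O(1/2)$ on your conditioning event, not $O(\tau)$, so your displayed tail bound (which charges the denominator only the crude $2\exp(-c_1 n)$) does not actually control the gradient at level $\tau$. You need a deviation bound for $\hat N(\boldtheta^*)$ at level proportional to $\tau$, contributing its own $\exp(-c\,n\tau^2)$ term; this is exactly the paper's middle term $b_n$, which it bounds by $C'_{\mathrm{max}}C'_{\boldf_t,\mathrm{max}}\,\bigl|\binom{n}{2}^{-1}\sum_{j\neq k}g(\boldx^{[j,k]};\boldtheta^*)-1\bigr|$ and then controls with Hoeffding's inequality for U-statistics. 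Note also that under Assumption \ref{assum.bounded.ratio} the ratio $\hat N(\boldtheta)/N(\boldtheta)$ is deterministically in $[C_{\mathrm{min}},C_{\mathrm{max}}]$ (it is an average of values of $g$), so the crude conditioning step is unnecessary in the first place. Once the denominator deviation is promoted to level $\tau$, your three-event accounting coincides with the paper's $a_n+b_n+\|\boldw_n\|$ decomposition and the remainder of your argument goes through as described.
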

\begin{proof}
	For conveniences, let's denote the \emph{approximated} PR model ${\exp\left(\sum_{ u\le v} \boldtheta_{u,v }^\top \boldpsi(\boldx_{u,v})\right)}/{\hat{N}(\boldtheta)}$ as $\hat{g}(\boldx;\boldtheta)$. Since 
$
	\hat{g}(\boldx;\boldtheta) = \frac{N(\boldtheta)}{\hat{N}(\boldtheta)} g(\boldx;\boldtheta),
$
and	$\frac{\hat{N}(\boldtheta)}{N(\boldtheta)} = \frac{1}{{n \choose 2}} \sum_{j\neq k} g(\boldx^{[j,k]}; \boldtheta)$ is always bounded by $[C_\mathrm{min},C_\mathrm{max}]$, we can see $\hat{g}(\boldx;\boldtheta)$ is also bounded. For simplicity, we write 
\begin{align*}
		0 < C'_\mathrm{min} \le \hat{g}(\boldx; \boldtheta) \le C'_\mathrm{max} < \infty.
\end{align*}

	We have
	\[
	\nabla_{\boldtheta_t}\ell(\boldtheta^*) = 
	-\left[\frac{1}{n}\sum_{i=1}^{n} \boldf_t(\boldx^{(i)})\right]+
	\left[ \frac{1}{{n \choose 2}}\sum_{j\le k} \hat{g}(\boldx^{[j,k]};\boldtheta^*) \boldf_t(\boldx^{[j,k]})\right].
	\]
	
	
	First we show that $\|\nabla_{\boldtheta_t}\ell(\boldtheta^*)\|$ can be upper-bounded as:
	\begin{align*}
	\|\nabla_{\boldtheta_t}\ell(\boldtheta^*) \| &\le \underbrace{\left\| -\frac{1}{n}\sum_{i=1}^{n} \boldf_t(\boldx^{(i)}) +\mathbb{E}_{p}[\boldf_t(\boldx)]\right\|}_{a_{n}}\\
	&+\underbrace{\left\| \frac{1}{{n \choose 2}}\sum_{j\neq k} \hat{g}(\boldx^{[j,k]};\boldtheta^*) \boldf_t(\boldx^{[j,k]}) - \frac{1}{{n \choose 2}}\sum_{j\neq k} g(\boldx^{[i,k]};\boldtheta^*) \boldf_t(\boldx^{[j,k]}) \right\|}_{b_{n}}\\
	&+\underbrace{\left\| \frac{1}{{n\choose 2}}\sum_{j\neq k} g(\boldx^{[j,k]};\boldtheta^*) \boldf_t(\boldx^{[j,k]}) - \mathbb{E}_{p1,p2}\left[ g(\boldx;\boldtheta^*)\boldf_t(\boldx) \right]\right\|}_{\|\boldw_n\|},
	\end{align*}
	We now need Hoeffding inequality \cite{hoeffdingInequality} for bounded-norm vector random variables  which has appeared in previous literatures such as \cite{supportVectorMachines}: For a set of bounded zero-mean vector-valued random variable $ \{\boldy_i\}^n_{i=1}, \|\boldy\|\le c$, we have  
	\begin{align*}
	P(\left\|\sum_{i=1}^n \boldy_i \right\| \ge n\epsilon)\le \exp\left(\frac{-n\epsilon^2}{2c^2}\right),
	\end{align*}
	for all 
	$\epsilon\ge \frac{2c}{\sqrt{n}}.$
	Now it is easy to see
	\begin{align}
	\label{eq.acnq}
	P(a_{n} \ge \epsilon) \le \exp\left(- \frac{2n\epsilon^2}{C'^2_\mathrm{\boldf_t,\mathrm{max}}} \right)
	\end{align}
	as long as 
	\begin{align}
	\label{lem2.hoeffding}
	\epsilon \ge \frac{C'_\mathrm{\boldf_t,\mathrm{max}}}{2\sqrt{n}}.
	\end{align}
	As to $b_{n}$, it can be upper-bounded by 
	\begin{align*}
	b_{n} &= \left\| \frac{1}{{n \choose 2}}\sum_{j\neq k} g(\boldx^{[j,k]};\boldtheta^*) \boldf_t(\boldx^{(i)}) - \frac{1}{{n \choose 2}}\sum_{j\neq k} \hat{g}(\boldx^{[j,k]};\boldtheta^*) \boldf_t(\boldx^{[j,k]}) \right\|\\
	&=\left\|\frac{\hat{N}(\boldtheta^*)}{N(\boldtheta^*)} \frac{1}{{n \choose 2}}\sum_{j\neq k} \hat{g}(\boldx^{[j,k]};\boldtheta^*) \boldf_t(\boldx^{[j,k]}) - \frac{1}{{n \choose 2}}\sum_{j\neq k} \hat{g}(\boldx^{[j,k]};\boldtheta^*) \boldf_t(\boldx^{[j,k]}) \right\|\\
	&\le\left\|\frac{1}{{n \choose 2}}\sum_{j\neq k} \hat{g}(\boldx^{[j,k]};\boldtheta^*) \boldf_t(\boldx^{[j,k]})\right\|\cdot \left\|\frac{\hat{N}(\boldtheta^*)}{N(\boldtheta^*)} - 1 \right\|\\
	&\le C'_\mathrm{max} C'_\mathrm{\boldf_t,\mathrm{max}} \left|\frac{1}{{n \choose 2}}\sum_{j\neq k} g(\boldx^{[j,k]};\boldtheta^*) - 1\right|,
	\end{align*}
	and due to Hoeffding inequality of the U-statistics (see \citep{hoeffdingInequality}, 5b) we may obtain:
	\begin{align}
	\label{eq.bnq}
	P(b_{n} > \epsilon) < 2\exp\left(- \frac{2n\epsilon^2}{C^2_\mathrm{max} C'^2_\mathrm{max} C'^2_\mathrm{\boldf_t,\mathrm{max}}} \right).
	\end{align}
	As to $\boldw_n$, we first bound its $i$-th element $w_{i,n}$ using Hoeffding inequality for U-statistics,
	\begin{align*}
	P(|w_{i,n}| \ge \epsilon) \le 2\exp\left(- \frac{2nb\epsilon^2}{C^2_\mathrm{max} C^2_\mathrm{\boldf_t,\mathrm{max}}} \right),
	\end{align*}
	thus by using the union bound, we have
	\begin{align*}
	P(\|\boldw_n\|_{\infty} \ge \epsilon) \le 2b\exp\left(- \frac{2nb\epsilon^2}{C^2_\mathrm{max} C^2_\mathrm{\boldf_t,\mathrm{max}}} \right),
	\end{align*}
	and since $\|\boldw_n\| \le \sqrt{b}\|\boldw_n\|_\infty$, we have
	\begin{align}
	\label{eq.wn}
	P( \|\boldw_n\| \ge \epsilon) \le P(\sqrt{b}\|\boldw_n\|_{\infty} \ge \epsilon) \le 2b\exp\left(- \frac{2n\epsilon^2}{C^2_\mathrm{max} C^2_\mathrm{\boldf_t,\mathrm{max}}} \right).
	\end{align}
	Therefore, combining \eqref{eq.acnq}, \eqref{eq.bnq} and \eqref{eq.wn}:
	\begin{align*}
	P(\|\nabla_{\boldtheta_t}\ell(\boldtheta^*)\| \ge 3\epsilon) \le P( a_{n}+b_{n}+c_{n}\ge 3\epsilon) & \le c''\exp\left(- \frac{n\epsilon^2}{c'} \right),
	\end{align*}
	where $c'$ is a constant defined as $c' = \max \left(\frac{1}{2}C^2_\mathrm{max}C^2_\mathrm{max} C'^2_\mathrm{\boldf_t,\mathrm{max}}, \frac{1}{2}C^2_\mathrm{max} C^2_\mathrm{\boldf_t,\mathrm{max}}, \frac{1}{2} C'^2_\mathrm{\boldf_t,\mathrm{max}}\right)$, and $c'' = 2b+3$, given $\epsilon \ge \frac{2C'_\mathrm{\boldf_t,\mathrm{max}}}{\sqrt{n}}$.
	Applying the union-bound for all $t\in S\cup S^c$,
	\begin{align*}
	P(\max_{t \in S\cup S^c} \|\nabla_{\boldtheta_t}\ell(\boldtheta^*)\| \ge 3\epsilon) \le  \frac{c''(m^2+m)}{2}\exp\left(- \frac{n\epsilon^2}{c'} \right),
	\end{align*}
	\begin{align*}
	P\left(\max_{t \in S\cup S^c} \|\nabla_{\boldtheta_t}\ell(\boldtheta^*)\| \ge \frac{\alpha \lambda_{n}}{4(2-\alpha)}\right) \le  \frac{c''(m^2+m)}{2}\exp\left(- \left(\frac{\alpha\lambda_{n}}{12(2-\alpha)}\right)^2 \frac{n}{c'} \right),
	\end{align*}
	and when 
	$\lambda_{n} \ge \frac{24(2-\alpha)}{\alpha}\sqrt{\frac{c'\log (m^2+m)/2}{n}}$,
	\begin{align*}
	P\left(\max_{t \in S\cup S^c} \|\nabla_{\boldtheta_t}\ell(\boldtheta^*)\| \ge \frac{\alpha \lambda_{n}}{4(2-\alpha)} \right)\le  c''\exp\left(-c'''n \right),
	\end{align*}
	where $c'''$ is a constant.
	Assume that $\log \frac{m^2+m}{2} > 1$ and we set $\lambda_{n}$ as 
	\[
	\lambda_n \ge \frac{24(2-\alpha)}{\alpha}\sqrt{\frac{(c'+C^2_\mathrm{\boldf_t,\mathrm{max}})\log (m^2+m)/2}{n}},
	\]
	then \eqref{lem2.hoeffding}, the condition of using vector Hoeffding-inequality is satisfied.
\end{proof}

Given Lemma \ref{lemma5.maintext}, we may obtain other technical results, such as the estimation error bound, using the same proof as it was demonstrated in Section 4, \citet{arxiv_onMNChange}.

\section{Experimental Settings}
\label{exp.setting}
We measure the performance of three methods using True Postive Rate (TPR) and True Negative Rate (TNR) that are used in \citet{zhao2014direct}. The TPR and TFR are defined as: 
\begin{align*}
\mathrm{TPR} = \frac{\sum_{t'\in S} \delta(\hat{\boldtheta}_{
		t'} \neq \boldzero)}{\sum_{t'\in S} \delta(\boldtheta^*_{t'} \neq \boldzero)}, 	~~\mathrm{TNR} = \frac{\sum_{t'' \in S^c} \delta(\hat{\boldtheta}_{t''} = \boldzero)}{\sum_{t''\in S^c} \delta(\boldtheta^*_{t''} = \boldzero)},
\end{align*}
where $\delta$ is the indicator function.

The differential learning method \citep{zhao2014direct} used in Section \ref{sec.syn} learns the difference between two precision matrices. In our setting, if one can learn the difference between the precision matrices of $p(\boldx)$ and $p(\boldx_1)p(\boldx_2)$, one can figure out all edges that go across two groups ($\boldx_1$ and $\boldx_2$). 

This method requires sample covariance matrices of $p(\boldx)$ and $p(\boldx_1)p(\boldx_2)$ respectively. The sample covariance of $p(\boldx)$ is easy to compute given joint samples. However, to obtain the sample covariance of $p(\boldx_1)p(\boldx_2)$, we would again need the U-statistics \citep{hoeffdingInequality} introduced in line Section \ref{sec.algorithm}. We may approximate the $u,v$-th element of the covariance matrix of $p(\boldx_1)p(\boldx_2)$ using the formula:
$\Sigma_{u,v} = \frac{1}{{n \choose 2}} \sum_{j \neq k} x_v^{[j,k]}x_u^{[j,k]},$
assuming the joint distribution has zero mean.

\section{Illustration of Sequence Matching}
\begin{figure*}[t]
\centering
\includegraphics[width = .95\textwidth]{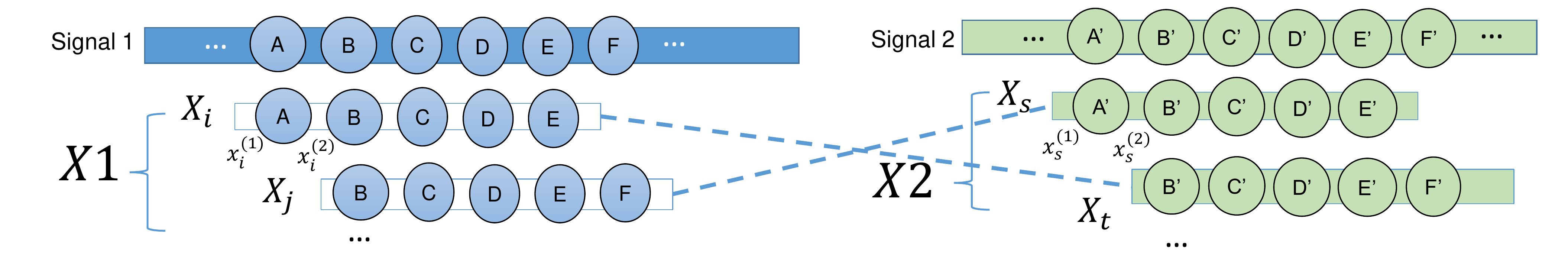}
\caption{The illustration of sequence matching problem formulation.}
\label{fig.illus.seq.matching}
\end{figure*}
We plot the illustrations of our sequence matching  problem formulation from two sequences in Figure \ref{fig.illus.seq.matching}.
\end{document}